\theoremstyle{thmstyleone}%
\newtheorem{theorem}{Theorem}
\theoremstyle{thmstyletwo}%
\theoremstyle{thmstylethree}%
\newtheorem{definition}{Definition}%
\begin{document}

\title[Article Title]{RandomNet: Clustering Time Series Using Untrained Deep Neural Networks}


\author[1]{\fnm{Xiaosheng} \sur{Li}}\email{xli22@gmu.edu}
\equalcont{These authors contributed equally to this work.}

\author*[1]{\fnm{Wenjie} \sur{Xi}}\email{wxi@gmu.edu}
\equalcont{These authors contributed equally to this work.}

\author[1]{\fnm{Jessica} \sur{Lin}}\email{jessica@gmu.edu}

\affil*[1]{\orgdiv{Department of Computer Science}, \orgname{George Mason University}, \orgaddress{\street{4400 University Dr}, \city{Fairfax}, \postcode{22030}, \state{Virginia}, \country{USA}}}


\abstract{Neural networks are widely used in machine learning and data mining. Typically, these networks need to be trained, implying the adjustment of weights (parameters) within the network based on the input data. In this work, we propose a novel approach, RandomNet, that employs untrained deep neural networks to cluster time series. RandomNet uses different sets of random weights to extract diverse representations of time series and then ensembles the clustering relationships derived from these different representations to build the final clustering results. By extracting diverse representations, our model can effectively handle time series with different characteristics. Since all parameters are randomly generated, no training is required during the process. We provide a theoretical analysis of the effectiveness of the method. To validate its performance, we conduct extensive experiments on all of the 128 datasets in the well-known UCR time series archive and perform statistical analysis of the results. These datasets have different sizes, sequence lengths, and they are from diverse fields. The experimental results show that the proposed method is competitive compared with existing state-of-the-art methods.}

\keywords{Time series clustering, Scalable, Random kernels, Deep neural network}



\maketitle

\section{Introduction}\label{sec:intro}
Neural networks serve as fundamental learning models across disciplines such as machine learning, data mining, and artificial intelligence. Typically, these networks go through a training phase during which their parameters are tuned according to specific learning rules and the data provided. A popular training paradigm involves backpropagation for optimizing an objective function. Once trained, these networks can be deployed for a variety of tasks, including classification, clustering, and regression.

A time series is a real-valued ordered sequence. The task of time series clustering assigns time series instances into homogeneous groups. It is one of the most important and challenging tasks in time series data mining and has been applied in various fields such as finance \citep{finance}, biology \citep{biology,biology2}, climate \citep{climate}, medicine \citep{medicine} and so on. In this work, we consider the partitional clustering problem, wherein the given time series instances are grouped into pairwise-disjoint clusters.

Existing time series clustering methods achieve good performance~\citep{kshape, DBA, spf}, but since they form clusters based on a single focus, such as shape or point-to-point distance, they are suboptimal for some specific data types. Here, we introduce a novel method named RandomNet for time series clustering using untrained deep neural networks. Different from conventional training methods that adjust network weights (parameters) using backpropagation, RandomNet utilizes different sets of random parameters to extract various representations of the data. By extracting diverse representations, it can effectively handle time series with different characteristics. These representations are clustered; the results from the clusters are then selected and ensembled to produce the final clustering. This approach ensures that data only needs to pass through the networks once to obtain the final result, obviating the need for backpropagation. Therefore, the time complexity of RandomNet is linear in the number of instances in the dataset, providing a more efficient solution for time series clustering tasks.

Given a neural network, the various sets of parameters in the network can be thought of as performing different types of feature extraction on the input data. As a result, these varied parameters can generate diverse data representations. Some of these representations may be relevant to a clustering task, producing meaningful clusterings, while others may be less useful or entirely irrelevant, leading to less accurate or meaningless clustering. This concept forms the basis of RandomNet: by combining clustering results derived from all these diverse representations, the meaningful and latent group structure within the data can be discovered. This is because the noise introduced by irrelevant representations tends to cancel each other out during the ensemble process whereas the connections provided by relevant representations are strengthened. Therefore, efficient and reliable clustering can be achieved despite the randomness of the network parameters.

To demonstrate the effectiveness of RandomNet, we provide theoretical analysis. The analysis shows that RandomNet has the ability to effectively identify the latent group structure in the dataset as long as the ensemble size is large enough. Moreover, the analysis also provides a lower bound for the ensemble size. Notably, this lower bound is independent of the number of instances or the length of the time series in the dataset, given that the data in the dataset are generated from the same mechanism. This provides the ability to use a fixed, large ensemble size to achieve satisfactory results, offering a practical approach to time series clustering that does not need adjustment for different dataset sizes or time series lengths.

We conduct extensive experiments on all 128 datasets in the well-known UCR time series archive \citep{UCRArchive} and perform statistical analysis on the results. These datasets have different sizes, sequence lengths, and characteristics. The results show that RandomNet has the top performance in the Rand Index compared with other state-of-the-art methods and achieves superior performance across all data types evaluated.

The main contributions of the paper are summarized as follows: 
\begin{itemize}
    \item We propose RandomNet, a novel method for time series clustering using untrained neural networks with random weights. There is no training or backpropagation in the method. 
    \item We demonstrate the effectiveness of the proposed method both empirically and theoretically. We conduct extensive experiments on 128 datasets to evaluate the proposed method and provide statistical analysis of the comparison results to show the superiority of our method over the state-of-the-art methods. 
    \item We demonstrate the efficiency of the proposed method through the experimental evaluation on data of varying sizes and time lengths. The results of linear curve-fitting on the running time indicate that the method has linear time complexity.
\end{itemize}

\section{Background and Related Work}\label{sec:background}
\subsection{Definitions and notations}
\begin{definition}
A time series $T=[t_1, t_2, \ldots, t_m]$ is an ordered sequence of real-value data points, where $m$ is the length of the time series.
\end{definition}

\begin{definition}
Given a set of time series $\{T_i\}_{i=1}^n$ and the number of clusters $k$, the objective of time series clustering is to assign each time series instance $T_i$ a group label $c_j$, where $j \in \{1, \ldots, k\}$. $n$ is the number of instances in the dataset. We would like the instances in the same group to be similar to each other and dissimilar to the instances in other groups.
\end{definition}

\subsection{Related work}\label{sec:related}
There has been much work on time series clustering, and we categorize them into four groups: raw-data-based methods, feature-based methods, deep-learning-based methods, and others.

\textbf{Raw-data-based methods.}
The raw-data-based methods directly apply classic clustering algorithms such as k-means \citep{kmeans} on raw time series. The standard k-means algorithm adopts Euclidean distance to measure the dissimilarity of the instances and often cannot handle the scale-variance, phase-shifting, distortion, and noise in the time series data. To cope with these challenges, dozens of distance measures for time series data have been proposed.

Dynamic Time Warping (DTW) \citep{DTW} is one of the most popular distance measures that can find the optimal alignment between two sequences. It is used in Dynamic time warping Barycenter Averaging (DBA) \citep{DBA} which proposes an iterative procedure to refine the centroid in order to minimize the squared DTW distances from the centroids to other time series instances. Similarly, K-Spectral Centroid (KSC) \citep{KSC} proposes a distance measure that finds the optimal alignment and scaling for matching two time series. The centroids are computed, based on matrix decomposition, to minimize the distances between the centroids and the instances under this distance measure.
Another approach, k-shape \citep{kshape} proposes a shape-based distance measure based on the cross-correlation of two time series. The distance measure shifts the two time series to find the optimal matching. Each centroid is obtained from the optimization of the squared normalized cross-correlation from the centroid to the instances in the cluster.

\textbf{Feature-based methods.}
Feature-based methods transform the time series into flat, unordered features, and then apply classic clustering algorithms to the transformed data.

Zakaria et al. \citep{ushapelet} propose to calculate the distances from a set of short sequences to the time series instances in the dataset and use the distance values as new features for the respective instances. This set of short sequences, called U-shapelets, is found by enumerating all the subsequences in the data to best separate the instances. K-means are then applied to the new features for clustering. In the work by Zhang et al. \citep{uslm}, instead of enumerating the subsequences, the shapelets are learned by optimizing an objective function with gradient descent.

A recent work \citep{SPIRAL} proposes Similarity PreservIng RepresentAtion Learning (SPIRAL) to sample pairs of time series to calculate their DTW distances and build a partially-observed similarity matrix.  The matrix is an approximation for the pair-wise DTW distances matrix in the dataset. The new features are generated by solving a symmetric matrix factorization problem such that the inner product of the new feature matrix can approximate the partially-observed similarity matrix.

\textbf{Deep-learning-based methods.}
Many methods in this category adopt the autoencoder architecture for clustering. In autoencoder, the low-dimension hidden layer output is used as features for clustering. Among these, Improved Deep Embedded Clustering (IDEC) \citep{IDEC} improves autoencoder by adding an extra layer to the model. It not only employs a reconstruction loss but also optimizes a clustering loss specifically designed to preserve the local structure of the data. This dual loss strategy can capture the global structure and local differences, thereby improving the clustering process to better learn the inherent characteristics of the data.

Deep Temporal Clustering (DTC) \citep{DTC} specifically addresses time series clustering by using Mean Square Error (MSE) to measure the reconstruction loss, and Kullback-Leibler (KL) divergence to measure clustering loss. Similarly, Deep Temporal Clustering Representation (DTCR) \citep{DTCR} adopts MSE for the reconstruction loss, while it uses a k-means objective function to measure the clustering loss. DTCR also employs a fake-sample generation strategy to augment the learning process. Clustering Representation Learning on Incomplete time-series data (CRLI) \citep{CRLI} further studies the problem of clustering time series with missing values. It jointly optimizes the imputation and clustering process, aiming to impute more discriminative values for clustering and to make the learned representations possess a good clustering property.

In the broader neural network literature, there is a class of methods that also use random weights known as the Extreme Learning Machine (ELM) \citep{ELM1, ELM2, ELM3}, which uses a single-layer feed-forward network to map inputs into a new feature space. The hidden layer weights are set randomly but the output weights are trained. The idea is to find a mapping space where instances of different classes can be separated well.

In the domain of time series classification, ROCKET \citep{ROCKET}, MiniRocket \citep{MINIROCKET} and MultiRocket \citep{multirocket} adopt strategies involving the use of random weights to generate features for classification. They use multiple single-layer convolution kernels instead of a deep network architecture.

Beyond the neural network and clustering fields, several works also adopt randomized features or feature maps \citep{RF1, RF2, RF3}. However, it is worth noting that all these methods diverge from our proposed approach in their network structures. Moreover, none of these methods incorporates ensemble learning, which forms the core of our approach. To the best of our knowledge, we are the first to propose using a network with random weights in time series clustering.

\textbf{Other methods.}
In our previous work \citep{spf}, we present a Symbolic Pattern Forest (SPF) algorithm for time series clustering, which adopts Symbolic Aggregate approXimation (SAX) \citep{SAX} to transform time series subsequences into symbolic patterns. Through iterative selections of random symbolic patterns to divide the dataset into two distinct branches based on the presence or absence of the pattern, a symbolic pattern tree is constructed. Repeating this process forms a symbolic pattern forest, the ensemble of which produces the final clustering result.

\section{The Proposed Method}
\label{sec:method}

\begin{figure*}[]
\centering
\includegraphics[width=1\textwidth, height=0.45\textheight]{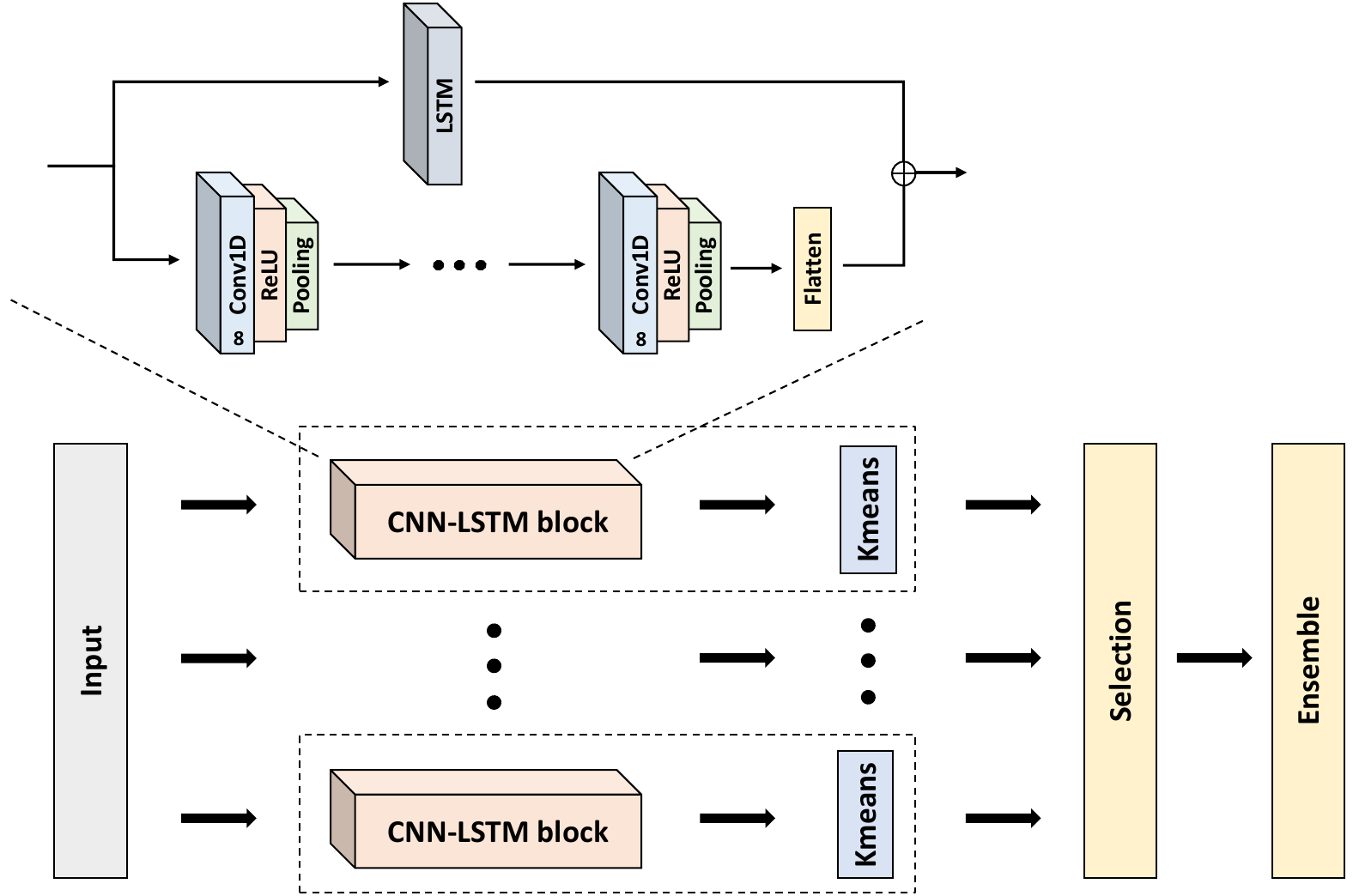}
\caption{The overall structure of RandomNet.}
\label{fig:architecture}
\end{figure*}

\subsection{Architecture and algorithm}

Figure~\ref{fig:architecture} shows the architecture of RandomNet. The method is structured with $B$ branches, each containing a CNN-LSTM block, designed to capture both spatial and temporal dependencies of time series, followed by k-means clustering. Each CNN-LSTM block contains multiple groups of CNN networks and an LSTM network, and each group of CNN network consists of a one-dimensional convolutional layer, a Rectified Linear Units (ReLU) layer, and a pooling layer. The output of the CNN networks is flattened. In our experiments, we set the number of groups of the CNN network equal to $\log_2{m}$, where $m$ represents the length of the time series. We fix the number of filters of the 1D convolution to 8, the filter size to 3, and the pooling size to 2. We set the number of LSTM units to 8. The weights used within the network are randomly chosen from $\{-1, 0, 1\}$. We opt for this finite parameter set over a continuous interval (e.g., $[-1, 1]$) for the purpose of simplifying the parameter space.

Each branch produces its own clustering, however, some clusterings might be skewed or deviant due to the inherent randomness of the weights. To alleviate this problem, we propose a selection mechanism to remove any clusterings that contain clusters that are either too small or too large.

Concretely, the method sets a lower bound $lr$ and an upper bound $ur$ for the cluster size. The number of instances that violate the bounds in each clustering is counted as \textit{violation}. For example, suppose a clustering contains two clusters with sizes 40 and 52, respectively. If the lower bound is 5 and the upper bound is 50, then the number of \textit{violations} for this clustering is $52-50=2$. The clusterings are sorted according to the number of \textit{violations} and the method selects the top $S$ clusterings for the ensemble. Here, $S=\max(zv, sr \times B)$, where $zv$ is the number of clusterings with zero violation values, $sr$ is a selection rate, and $B$ is the number of branches in the method.

Finally, we ensemble the results to form the final clustering. While the diversity of clustering results from a large number of different branches helps reveal various intrinsic patterns in the data, it introduces the challenge of combining these different results into a cohesive unified clustering. To address this challenge, we adopt the Hybrid Bipartite Graph Formulation (HBGF) \citep{HBGF} to perform clustering ensemble.
This technique builds a bipartite graph for the clusterings in the ensemble, where the instances and clusters become the vertices. If an instance belongs to a cluster, then there is an edge connecting the two respective vertices in the graph. Partitioning the graph gives a consensus clustering for the ensemble. HBGF has two main advantages. First, it can extract consensus from differences, identifying and strengthening the repeated patterns of grouping across the clustering set. Second, it has linear time complexity, which ensures the scalability of our model for large datasets. In our implementation, we use Metis \citep{metis} library to partition the graph.

Algorithm 1 gives the pseudo-code of RandomNet. Given a time series dataset $D=\{T_i\}_{i=1}^n$, a branch number $B$, a cluster number $k$, bounds $lr$ and $ur$, and a selection rate $sr$, the algorithm outputs a clustering assignment $C$ for the input time series.

\begin{algorithm}[t]
\caption{RandomNet}

\begin{algorithmic}[1]
\Require
    $D$: time series dataset; 
    $B$: branch number; 
    $k$: number of clusters; 
    $lr, ur$: lower and upper bounds; 
    $sr$: selection rate
\Ensure 
    $C$: clustering assignment of $D$
\Function{RandomNet}{$D$, $B$, $k$, $lr$, $ur$, $sr$}
    \State $ClusteringSet \gets$ empty
    \For {$i = 1$ to $B$}
        \State $Randomize(CNN\_LSTM\_blocks)$
        \State $Features \gets CNN\_LSTM\_blocks(D)$
        \State $Clustering \gets Kmeans(Features)$
        \State $ClusteringSet.add(Clustering)$
    \EndFor
    \State $SelectedSet \gets Selection(ClusteringSet, sr, lr, ur, B)$  
         \hfill // Algorithm 2
    \State $C \gets Ensemble(SelectedSet)$ \hfill // Algorithm 3
    \State \Return $C$
\EndFunction
\end{algorithmic}

\end{algorithm}

\begin{algorithm}[]
\caption{Selection Mechanism}
\begin{algorithmic}[1]
\Require
    $ClusteringSet$: a set of clusterings; 
    $sr$: selection rate;
    $lr, ur$: lower and upper bounds; 
    $B$: branch number
\Ensure 
    $SelectedSet$: a subset of $ClusteringSet$ selected based on criteria

\Function{Selection}{$ClusteringSet, sr, lr, ur, B$}
    \State $SelectedSet \gets$ empty
    \State $ViolationsList \gets$ empty list of $Size(ClusteringSet)$
    \For{$i = 1$ to $Size(ClusteringSet)$}
        \State $Clustering \gets ClusteringSet[i]$
        \State $Violations \gets 0$
        \For{$Cluster$ in $Clustering$}
            \If{$Size(Cluster) > ur$}
                \State $Violations \gets Violations + (Size(Cluster) - ur)$
            \ElsIf{$Size(Cluster) < lr$}
                \State $Violations \gets Violations + (lr - Size(Cluster))$
            \EndIf
        \EndFor
        \State $ViolationsList[i] \gets Violations$
    \EndFor
    \State $SortedIndices \gets$ Sort indices of $ViolationsList$ in ascending order of violations
    \State $EffectiveSize \gets$ Max($sr \times B$, number of $Clustering$ in $ClusteringSet$ with $0$ violations)
    \For{$i = 1$ to $EffectiveSize$}
        \State $SelectedSet$.add($ClusteringSet[SortedIndices[i]]$)
    \EndFor
    \State \Return $SelectedSet$
\EndFunction
\end{algorithmic}
\end{algorithm}

\begin{algorithm}
\caption{Ensemble}
\begin{algorithmic}[1]
\Require
    $SelectedSet$: a selected set of $ClusteringSet$
\Ensure 
    $C$: final clustering assignment

\Function{Ensemble}{$SelectedSet$}
    \State $Graph \gets$ Construct bipartite graph from $SelectedSet$
    \State $C \gets$ Partition $Graph$ using Hybrid Bipartite Graph Formulation 
    \State \Return $C$
\EndFunction
\end{algorithmic}
\end{algorithm}

In Algorithm 1 Line 4, the parameters in the CNN-LSTM blocks are randomly set from $\{-1, 0, 1\}$ as previously noted. The data passes the CNN-LSTM blocks to generate features for each time series in Line 5. Line 6 applies k-means on the features to produce a clustering assignment. Line 7 adds the clustering to the ensemble set. In Line 9, the selection mechanism (Algorithm 2) introduced above is performed on the ensemble set with the user-provided selection rate and bounds. Finally in Line 10, the ensemble function (Algorithm 3) ensembles the clusterings in $SelectedSet$ and gives the clustering $C$ as the output of the algorithm. 

\subsection{Effectiveness of RandomNet}
Given the network architecture, its parameters (weights) represent a form of feature extraction from the data and thus produce a kind of representation. With multiple random parameters, we can have multiple representations.

Some representations are relevant to the clustering task. The instances that are similar to each other are more likely to be put in the same cluster under these relevant representations. Other representations are irrelevant to the clustering task. Under these representations, two similar instances may not be assigned in the same cluster.

The intuition is that, in the ensemble, the effect of irrelevant representations can cancel each other out, and the effect of relevant representations can dominate the ensemble. Inspired by \citep{spf} which is described in the previous section, we provide effectiveness analysis for RandomNet.

We assume the data contains $k$ distinct clusters which correspond to $k$ different classes. We have the following theorem:

\begin{theorem}
    Assume two instances, $T_1$ and $T_2$, are from the same class. If they reside within the same cluster under some relevant representations, then RandomNet assigns these two instances to the same cluster in the final output.
\end{theorem}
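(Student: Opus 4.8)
The plan is to recast the ensemble as a co-association process on the selected clusterings and then argue that the bipartite-graph partition produced by HBGF must keep $T_1$ and $T_2$ on the same side. First I would formalize the dichotomy already introduced informally in the text: call a branch \emph{relevant} if the k-means clustering it produces is consistent with the underlying $k$-class partition, so that every pair of same-class instances is co-clustered and every pair of different-class instances is separated, and \emph{irrelevant} otherwise. Under this definition the hypothesis that $T_1$ and $T_2$ reside in the same cluster under some relevant representations is automatic for any same-class pair: in every relevant branch the two instances land in a common cluster precisely because they share a class label.

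The second step is to quantify the bond between $T_1$ and $T_2$ inside the HBGF bipartite graph. Each selected clustering contributes cluster-vertices, an instance is joined by an edge to each cluster that contains it, and two instances are pulled together in the partition to the extent that they attach to common cluster-vertices. I would show that the relevant branches contribute a deterministic co-association signal, namely a guaranteed set of shared cluster-vertices for the same-class pair, whereas the irrelevant branches contribute only noise. The key probabilistic observation is an exchangeability property: because the weights are drawn independently from $\{-1,0,1\}$ and an irrelevant representation carries no class information, the indicator that $T_1$ and $T_2$ are co-clustered in an irrelevant branch has the same distribution as the indicator for $T_1$ paired with any fixed different-class instance. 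Hence in expectation the co-association count of the same-class pair strictly exceeds that of any cross-class pairing, the gap being the number of relevant branches that survive selection.

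Third, I would verify that the selection mechanism does not erase this gap. Relevant branches, by construction, yield clusterings whose cluster sizes track the true class sizes and therefore incur few or zero violations of the $lr$ and $ur$ bounds, so they are exactly the clusterings the selection step is designed to retain; I would argue they pass the filter and are counted among the top $S$. Combining the strict expected gap with a concentration argument over the $B$ independent branches, the number of shared cluster-vertices for the same-class pair dominates that of every competing cross-class pair with high probability once $B$ is large enough, so the minimum-cut objective that HBGF optimizes has no incentive to separate $T_1$ and $T_2$, and they are assigned to the same final cluster.

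The hard part will be the last link: passing from ``same-class pairs have strictly larger co-association than cross-class pairs'' to ``the graph partition actually keeps them together.'' Metis minimizes a normalized cut heuristically rather than exactly, so a fully rigorous argument must either adopt an idealized partition model in which the consensus partition attains the minimum cut, or supply a spectral or structural bound showing that severing a pair of heavily co-associated instance-vertices is strictly costlier than respecting the class boundary. The second delicate point is the interaction between the selection filter and the concentration bound: one must ensure that enough relevant branches survive to dominate the irrelevant noise \emph{uniformly} over all cross-class competitors, and this is where an explicit lower bound on the ensemble size $B$, independent of $n$ and $m$ as promised in the introduction, would enter the argument.
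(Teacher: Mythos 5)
Your proposal follows essentially the same route as the paper's proof: partition the branches into relevant ones (which co-cluster $T_1$ and $T_2$ with probability $1$) and irrelevant ones (which co-cluster them with probability $1/k$), observe that the resulting probability $\gamma + (1-\gamma)/k$ of co-clustering strictly exceeds the probability $(1-\gamma)(k-1)/k$ of separation, and invoke the law of large numbers over the independent branches to conclude that the co-association count for the pair dominates in a large ensemble. Where you diverge is in going further than the paper does. The paper stops at the inequality $Count(C(T_1)=C(T_2)) > Count(C(T_1)\neq C(T_2))$ and simply asserts that ``consequently'' the ensemble assigns the two instances to the same cluster; it does not analyze the selection mechanism at all, nor does it say anything about how HBGF or the Metis heuristic translates a majority co-association signal into an actual joint assignment. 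The ``hard part'' you flag --- bounding the cut cost of severing a heavily co-associated pair, and ensuring enough relevant branches survive selection uniformly over cross-class competitors --- is precisely the step the paper leaves implicit, so your proposal is if anything more honest about where the argument is incomplete. Your comparison of the same-class pair against cross-class pairs via exchangeability is also a refinement absent from the paper, which only compares co-clustering versus separation counts for the single pair $(T_1,T_2)$. In short: same decomposition and same probabilistic engine, but you have correctly identified that the final inference from co-association dominance to the consensus partition is asserted rather than proved in the paper, and any fully rigorous version would need the idealized-partition or spectral argument you sketch.
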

\begin{proof}
    Let $\gamma$ denote the percentage of relevant representations among all the representations. In each CNN-LSTM block, if the representation is relevant, we have $P(C(T_1)=C(T_2))=1$,
    where $P(\cdot)$ stands for the probability and $C(\cdot)$ denotes the clustering assignment. If the representation is irrelevant, the instances are assigned to any of the $k$ clusters randomly. Hence, we can deduce: $P(C(T_1)=C(T_2))=1/k$, and $P(C(T_1) \neq C(T_2))=(k-1)/k$.
    Considering the above, overall we can derive that $P(C(T_1)=C(T_2))= \gamma \times 1 + (1- \gamma) \times 1/k$ and $P(C(T_1) \neq C(T_2))= (1- \gamma) \times (k-1)/k$. It is clear that $P(C(T_1)=C(T_2))> P(C(T_1) \neq C(T_2))$.
    Since each block is independent of the others, according to the law of large numbers \citep{lln} which states that if we repeat an experiment independently a large number of times, the average of the results obtained from those experiments will converge to the expected value, we have:
    \begin{equation}
    \label{eqn_count}
    Count(C(T_1)=C(T_2))> Count(C(T_1) \neq C(T_2))
    \end{equation}
    where we consider a sufficiently large ensemble size and $Count(\cdot)$ is the count of occurrences. Consequently, in the ensemble result, instances $T_1$ and $T_2$ belong to the same cluster.
\end{proof}

The above analysis assumes we have a large ensemble size, and the following theorem provides a lower bound for the ensemble size. Here, for simplicity, we set $k=2$.

\begin{theorem}
    Assume the ensemble size to be $b$. Then, the lower bound of $b$ needed to provide a good clustering is given by $-2 \ln \alpha / \gamma^2$, where $\gamma$ represents the percentage of relevant representations and 1-$\alpha$ is the confidence level.
\end{theorem}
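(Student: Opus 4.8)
The plan is to reduce the claim to a concentration-of-measure statement for a sum of independent Bernoulli trials and then apply Hoeffding's inequality. Building on the previous theorem with $k=2$, each branch independently assigns $T_1$ and $T_2$ to the same cluster with probability $p = \gamma \times 1 + (1-\gamma) \times \tfrac{1}{2} = \tfrac{1+\gamma}{2}$ and to different clusters with probability $\tfrac{1-\gamma}{2}$; note that the gap between these two probabilities is exactly $\gamma$. The ensemble returns a correct (same-cluster) verdict precisely when a majority of the $b$ branches vote ``same'', so I would let $X$ count the branches that place the two instances together, giving a binomial random variable with mean $E[X] = b\,\tfrac{1+\gamma}{2}$.

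First I would cast the failure event as $\{X \le b/2\}$ and rewrite it as a one-sided deviation below the mean: since $E[X] - b/2 = b\gamma/2$, failure is equivalent to $X \le E[X] - b\gamma/2$. Next I would invoke Hoeffding's inequality for a sum of $b$ independent $\{0,1\}$-valued variables, which gives $P(X \le E[X] - t) \le \exp(-2t^2/b)$. Substituting the deviation $t = b\gamma/2$ collapses the right-hand side to $\exp(-b\gamma^2/2)$.

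Finally, to certify that the ensemble is correct at confidence level $1-\alpha$, I would demand that the failure probability be at most $\alpha$, i.e. $\exp(-b\gamma^2/2) \le \alpha$. Taking logarithms and solving for $b$ yields $b \ge -2\ln\alpha / \gamma^2$, matching the stated bound exactly.

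The step I expect to be the main obstacle is routing the probability gap into the exponent with the correct power of $\gamma$: the deviation $t$ must be chosen as half the gap times $b$ so that the Hoeffding exponent $2t^2/b$ simplifies to $b\gamma^2/2$, producing the characteristic $1/\gamma^2$ dependence. A secondary subtlety is the handling of the tie case $X = b/2$, which I would absorb into the failure event to keep the bound conservative; this choice does not alter the asymptotic form of the result. It is also worth flagging that the argument is this clean only because $k=2$ was fixed; for general $k$ the simple majority-vote argument would need to be replaced by a plurality argument, which complicates the concentration analysis.
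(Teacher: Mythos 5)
Your proposal is correct and follows essentially the same route as the paper: both model the count of ``same-cluster'' votes as a binomial variable with success probability $p=(1+\gamma)/2$, bound the failure event $\{Y\le b/2\}$ via the one-sided Hoeffding tail with deviation $b\gamma/2$, and solve $e^{-b\gamma^2/2}\le\alpha$ to obtain $b\ge -2\ln\alpha/\gamma^2$. The only difference is cosmetic --- you apply Hoeffding's inequality to the sum directly while the paper states it for the empirical mean $\bar{Y}$ and then rescales.
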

\begin{proof}
    Let $Y$ be a random variable indicating the number of cases where $C(T_1)=C(T_2)$. The random variable $Y$ follows a binomial distribution:
    \begin{equation}
    P(Y=s) = \binom{b}{s}p^s(1-p)^{b-s}
    \end{equation}
    where $p=P(C(T_1)=C(T_2))$. Equation (\ref{eqn_count}) needs to hold with high probability, leading to the following inequality:
    \begin{equation}
    P(Y \le s) = \sum_{i=0}^s \binom{b}{i}p^i(1-p)^{b-i} \le \alpha
    \end{equation}
    where $s=b/2$, $1-\alpha$ is the confidence level. By applying Hoeffding's inequality \citep{hoeffding}, we have $P(E[\bar{Y}]-\bar{Y} \ge t) \le e^{-2bt^2}$,
    where $t \ge 0$. Considering $E[\bar{Y}]=p$, we have:
    \begin{align}
    P(E[\bar{Y}]-\bar{Y} \ge t) &= P(bE[\bar{Y}]-b\bar{Y} \ge bt)\\
     &= P(Y \le bp-bt) \le e^{-2bt^2}
    \end{align}
    Let $s=bp-bt$, then $t=(bp-s)/b$, so we get $P(Y \le s) \le e^{-2(bp-s)^2/b} \le \alpha$.
    With $s=b/2$, $p=\gamma \times 1 + (1- \gamma) \times 1/2$, we solve the above inequality and derive $b \ge -2 \ln \alpha / \gamma^2$.
\end{proof}

Here is a concrete example for the bound: suppose we have a confidence level of 99\% and we estimate that 30\% of the representations are relevant. In this case, $\alpha=0.01$ and $\gamma=0.3$, yielding a $b$ value of at least 102.33. From the theorem, one observes that the lower bound is independent of the number of instances in the dataset. This implies that we can maintain a sufficiently large fixed ensemble size to handle inputs of varying sizes, provided the data generation mechanism remains constant. We verify this in the experimental section by using a fixed ensemble size chosen through experiments and varying the number of time series instances that are generated from the same mechanism.

\section{Experimental Evaluation}
\label{sec:experiment}

\subsection{Experimental setup}

To evaluate the effectiveness of RandomNet, we run the algorithm on all 128 datasets from the well-known UCR time series archive \citep{UCRArchive}. These datasets come from different disciplines with various characteristics. Each dataset in the archive is split into a training set and a testing set. We fuse the two sets and utilize the entire dataset in the experiment. Some of these datasets contain varying-length time series. To ensure that all time series in a dataset have the same length, we append zeros at the end of the shorter series.

For benchmarking purposes, we run kDBA \citep{DBA}, KSC \citep{KSC}, k-shape \citep{kshape}, SPIRAL \citep{SPIRAL}, and SPF~\citep{spf} on the same datasets. These methods are used as representatives of the state-of-the-art for time series clustering. Additionally, we incorporated deep-learning-based methods, Improved Deep Embedding Clustering (IDEC) \citep{IDEC} and DTC \citep{DTC}, for comparison. While DTC is specifically designed for time series data, as discussed in Section \ref{sec:related}, IDEC is a general clustering method. We also compare our method with ROCKET \citep{ROCKET} and its variants, MiniRocket \citep{MINIROCKET} and MultiRocket \citep{multirocket}, since we are interested in how other models that also used random parameters compare to ours. As they are all specifically designed for time series classification, we adapt them to our use case by removing the classifier component and replacing it with k-means. All references to them will pertain to this adapted version. We do not include DTCR \citep{DTCR} in the comparison, as we are unable to reproduce the results reported in its paper, despite using the code provided by its authors\footnote{https://github.com/qianlima-lab/DTCR}. This issue has been similarly reported by others on the GitHub issue webpage for the project\footnote{https://github.com/qianlima-lab/DTCR/issues/8}. We do not include CRLI \citep{CRLI} since it is specially designed for incomplete time series data which is outside the scope of our study. Table \ref{tab:comparison} provides a concise comparison of our method and various baselines we used in experiments, outlining their applicable data types, method types, main focuses, and time complexity in terms of the number of instances ($n$) and the length of time series ($l$). Note that due to the complexity involved in training deep learning models, we have not included the time complexity for the two deep learning methods, DTC and IDEC, which require network training. For a more detailed description of each method, please refer to Section \ref{sec:related}. We provide the experimental evaluation for the time complexity of our model in Section \ref{sec:time}.

\begin{table}[]
\centering
\scalebox{0.9}{\begin{tabular}{|l|l|l|p{4cm}|p{3cm}|}
\hline
\textbf{Method} & \textbf{Data Type} & \textbf{Method Type} & \textbf{Main Focus} & \textbf{Time Complexity}\\
\hline
K-means & Varied & Raw-data-based & Centroid-based clustering for partitioning. & $O(n\cdot l)$ \\
\hline
KSC & Time Series & Raw-data-based & Time series clustering by patterns. & $O(max(n\cdot l^2, l^3))$\\
\hline
k-shape & Time Series & Raw-data-based & Shape-based clustering for time series. & $O(max(n\cdot l\cdot log(l), n\cdot l^2, l^3))$\\
\hline
SPF & Time Series & Others & Linear complexity clustering using symbolic patterns. & $O(n\cdot l)$ \\
\hline
SPIRAL & Time Series & Feature-based & Representation learning for time series clustering. & $O(n\cdot log(n)\cdot l^2)$ \\
\hline
kDBA & Time Series & Raw-data-based & Averaging sequences with DTW for clustering. & $O(n\cdot l^2)$\\
\hline
IDEC & Varied & Deep-learning-based & Deep clustering with local structure preservation. & N/A\\
\hline
DTC & Time Series & Deep-learning-based & Integrates dimensionality reduction with time series clustering. & N/A\\
\hline
MiniRocket & Time Series & Deep-learning-based & Fast deep neural network with random weights for time series classification. & $O(n\cdot l)$\\
\hline
RandomNet & Time Series & Deep-learning-based & Untrained deep learning architecture that generates diverse representations. & $O(n\cdot l)$\\
\hline
\end{tabular}}
\caption{Comparison of baselines and our method.}
\label{tab:comparison}
\end{table}

The source code of kDBA, KSC, and k-shape are obtained from the authors of k-shape. The source code of SPIRAL\footnote{https://github.com/cecilialeiqi/SPIRAL}, SPF\footnote{https://github.com/xiaoshengli/SPF}, IDEC\footnote{https://github.com/XifengGuo/IDEC}, DTC\footnote{https://github.com/FlorentF9/DeepTemporalClustering},
ROCKET\footnote{https://github.com/angus924/rocket},
MiniRocket\footnote{https://github.com/angus924/minirocket} and
MultiRocket\footnote{https://github.com/ChangWeiTan/MultiRocket}is available online. The number of clusters $k$ is set to equal the number of classes in the datasets, and we follow the default parameter settings in the source code. 

For the default hyperparameters of RandomNet, the number of branches $B$ is set to 800, the selection rate $sr$ is set to 0.1, the lower bound $lr$ is set to $0.3 \times acs$, and the upper bound $ur$ is set to $1.5 \times acs$, where $acs$ refers to the average cluster size, which is computed as $round(number\_of\_instances/k)$. Detailed hyperparameter selection experiments are elaborated upon in the subsequent section. 

We implement RandomNet using Python and TensorFlow 2.1 and use the k-means implementation in the Scikit-learn package \citep{sklearn} with default settings. The experiments are run on a node of a batch-processing cluster. The node uses a 2.6 GHz CPU and 64 GB RAM. Given that the process does not involve neural network training, there is no necessity for GPU utilization. 
The source code of RandomNet is available at:  \url{https://github.com/Jackxiini/RandomNet}.

All the datasets in the experiments have labels that can be used as ground truth. We use Rand Index \citep{randindex} to measure the clustering accuracy of the methods under comparison. 
The range of the Rand Index falls in $[0, 1]$ where a large value indicates that the clustering matches the actual class relationship well.

\begin{figure*}[]
\centering
\includegraphics[width=1\textwidth]{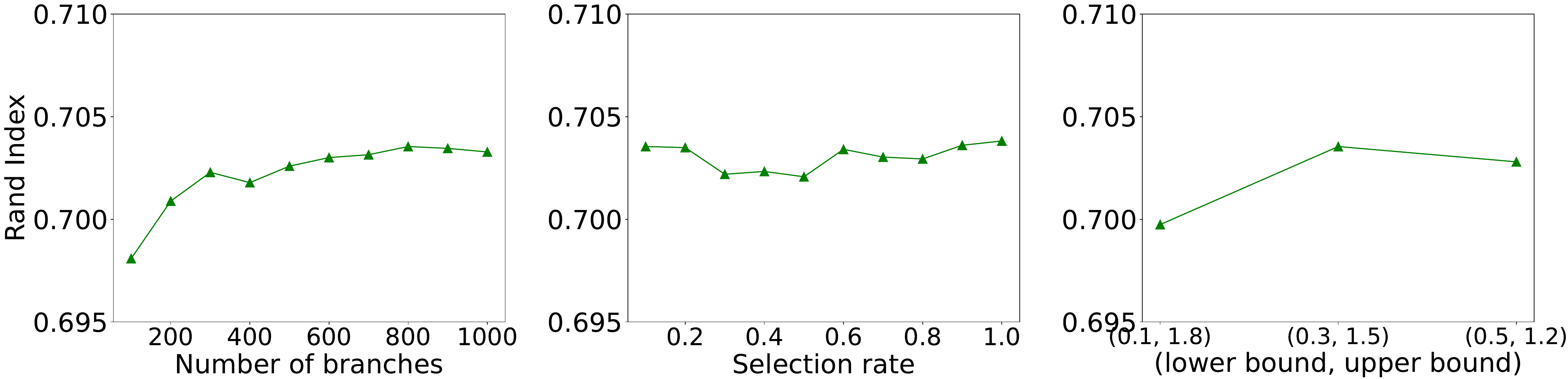}
\caption{Left: Rand Index by the varying number of branches, ranging from 100 to 1000. Middle: Rand Index by the varying selection rate, ranging from 0.1 to 1. Right: Rand Index by three sets of the lower bound and upper bound, representing wide interval, intermediate interval, and narrow interval.}
\label{fig:hyperp}
\end{figure*}

\subsection{Hyperparameter analysis}
To fine-tune and investigate the influence of hyperparameters on model performance, we conduct a series of experiments. We select 20 datasets from the UCR time series archive \citep{UCRArchive} and run each experiment 10 times and take the average Rand Index as the result. 

\textbf{Number of branches.}
The number of branches $B$ plays a pivotal role in our model, affecting both the quality of the clustering and the computational efficiency. We test $B$ values ranging from 100 to 1000, in increments of 100, and keep all other settings default.

The left plot of Fig. \ref{fig:hyperp} shows that the average Rand Index improves with an increase in the number of branches until $B=800$. 
Increasing the number of branches beyond 800 only results in a rise in running time, without contributing to better clustering quality. Therefore, we set the default $B$ for all datasets to 800.

\textbf{Selection rate.}
The selection rate $sr$ controls the lower bound of the number of selected clustering. We test $sr$ values ranging from 0.1 to 1, in increments of 0.1, and keep all other settings default.

The middle plot of Fig. \ref{fig:hyperp} shows slight changes in the average Rand index as $sr$ changes. Since a larger $sr$ will increase the running time of the model, we choose $sr=0.1$ as the default value.

\textbf{Lower bound and upper bound.}
The lower bound $lr$ and the upper bound $ur$ are crucial in detecting the number of \textit{violation}, which affects the quality of clustering. We evaluate three pairs of $lr$ and $ur$, $(0.1, 1.8), (0.3, 1.5)$, and $(0.5, 1.2)$, representing wide intervals, intermediate intervals, and narrow intervals, respectively. For simplicity, we present these as multipliers; the actual lower and upper bounds are obtained by multiplying these values with the average cluster size $acs$. Narrower intervals are more restrictive to the size of the clustering and thus will increase the number of \textit{violations}. We keep all other settings as default.

The right plot of Fig. \ref{fig:hyperp} shows the effects of the wide interval, intermediate interval, and narrow interval on the Rand Index. We can observe that appropriate intervals can bring better performance. The intermediate interval has the best Rand Index, whereas the wide interval underperforms the other two due to its lax size constraints which affects its ability to screen \textit{violations}. Therefore, we set $lr$ to $0.3 \times acs$ and $ur$ to $1.5 \times acs$ as default.

\textbf{Selection bias test.} Since the dataset selection for hyperparameter tuning may cause potential selection bias, we conduct an experiment to show how the choice affects the experimental results. We divide 128 datasets into five similar-sized groups, optimizing three hyperparameters separately for each. We then apply the optimized hyperparameters to the remaining datasets in each group. For each hyperparameter, we average the Rand Index for each group, and obtain the average and the standard deviation of the Rand Index of the five groups. As illustrated in Table \ref{tab:selection bias}, the standard deviations, 0.0073, 0.0097 and 0.0078, for the Average Rand Index across the five groups are very small, indicating that the selection of datasets for hyperparameter tuning has a minimal effect on the final results. Therefore, we retain the selection of the previous 20 datasets in subsequent experiments.

\begin{table}[]
\caption{Dataset selection bias test for each hyperparameter}
\begin{tabular}{|l|c|c|c|}
\hline
                   & Num.of branches & Selection rate & \multicolumn{1}{c|}{\begin{tabular}[c]{@{}c@{}}Lower bound and \\ upper bound\end{tabular}} \\ \hline
Average Rand Index & $0.7391\pm0.0073$       & $0.7374\pm0.0097$         & $0.7374\pm0.0078$                                                                                      \\ \hline
\end{tabular}
\label{tab:selection bias}
\end{table}

\subsection{Experimental results}
Since we use 20 of the 128 datasets to select the hyperparameters, for the sake of fairness, we remove them in the following comparison and only show the results for the remaining 108 datasets.

\textbf{Comparison with k-means.}

\begin{figure}[]
\centering
\includegraphics[width=0.85\textwidth]{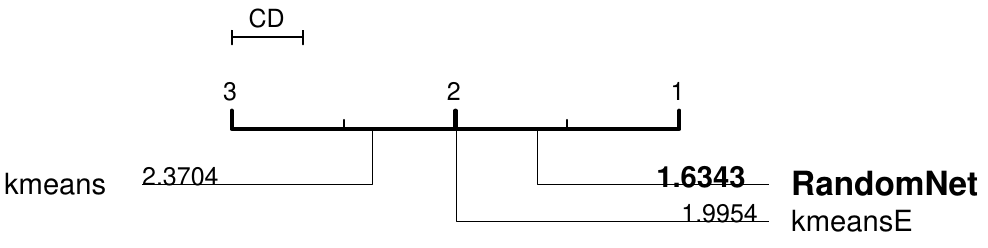}
\caption{Critical difference diagram of the comparison with k-means, kmeansE and RandomNet.}
\label{fig:compareKmeans}
\end{figure}

As RandomNet uses k-means to generate clustering assignments, we are interested in how they compare. We also run k-means 800 times and use HBGF \citep{HBGF} to ensemble the results, which is denoted as kmeansE. 

We run the methods under comparison on the 108 datasets and record the Rand Index. Figure \ref{fig:compareKmeans} presents a critical difference diagram \citep{CD} for the comparison based on Rand Index. 
The values adjacent to each method represent the respective average rank (with smaller being better), and the methods connected by a thick bar do not significantly differ at the 95\% confidence level. Notably, there is no thick bar present in Figure \ref{fig:compareKmeans}, suggesting all methods have significant differences from each other.

\begin{table*}[]
\caption{Comparing RandomNet with the state-of-the-art methods on the UCR Archive benchmarks}
\label{tabe:compareSOA}
\scalebox{0.63}{\begin{tabular}{@{}llcccccccccc@{}}
\toprule
Datasets                     & Data Type  & \multicolumn{1}{c}{k-means} & \multicolumn{1}{c}{KSC} & \multicolumn{1}{c}{k-shape} & \multicolumn{1}{c}{SPF} & \multicolumn{1}{c}{SPIRAL} & \multicolumn{1}{c}{kDBA} & \multicolumn{1}{c}{IDEC} & \multicolumn{1}{c}{DTC} & \multicolumn{1}{c}{MiniR} & \multicolumn{1}{c}{Ours} \\ \midrule
ACSF1                        & Device     & 0.705                      & 0.736                   & 0.713                      & 0.862                   & 0.782                      & 0.726                    & 0.844                    & 0.437                   & \textbf{0.881}                 & 0.836                         \\
Adiac                        & Image      & 0.940                      & 0.947                   & 0.946                      & 0.961                   & 0.900                      & 0.955                    & 0.026                    & 0.786                   & \textbf{0.962}                 & 0.960                         \\
AllGestureWiimoteX           & Sensor     & 0.830                      & 0.099                   & \textbf{0.854}             & 0.836                   & 0.835                      & 0.819                    & 0.824                    & 0.817                   & 0.820                          & 0.834                         \\
AllGestureWiimoteY           & Sensor     & 0.825                      & 0.099                   & \textbf{0.867}             & 0.835                   & 0.842                      & 0.814                    & 0.818                    & 0.819                   & 0.829                          & 0.833                         \\
AllGestureWiimoteZ           & Sensor     & 0.830                      & 0.099                   & \textbf{0.843}             & 0.841                   & 0.763                      & 0.810                    & 0.819                    & 0.836                   & 0.811                          & 0.837                         \\
ArrowHead                    & Image      & 0.573                      & 0.630                   & 0.623                      & 0.644                   & 0.608                      & 0.531                    & 0.667                    & \textbf{0.694}          & 0.661                          & 0.667                         \\
Beef                         & Spectro    & 0.675                      & 0.714                   & 0.702                      & 0.722                   & 0.592                      & 0.677                    & 0.688                    & \textbf{0.748}          & 0.668                          & 0.733                         \\
BeetleFly                    & Image      & 0.508                      & 0.487                   & 0.591                      & 0.492                   & 0.519                      & 0.487                    & 0.488                    & 0.549                   & \textbf{0.771}                 & 0.621                         \\
BirdChicken                  & Image      & 0.533                      & 0.533                   & 0.569                      & \textbf{0.704}          & 0.499                      & 0.487                    & 0.550                    & 0.549                   & 0.544                          & 0.508                         \\
BME                          & Simulated  & 0.612                      & 0.565                   & 0.660                      & 0.637                   & 0.611                      & \textbf{0.775}           & 0.615                    & 0.705                   & 0.610                          & 0.621                         \\
Car                          & Sensor     & 0.675                      & 0.676                   & 0.631                      & \textbf{0.816}          & 0.642                      & 0.468                    & 0.674                    & 0.722                   & 0.657                          & 0.677                         \\
CBF                          & Simulated  & 0.704                      & 0.525                   & 0.877                      & \textbf{0.989}          & 0.717                      & 0.888                    & 0.671                    & 0.725                   & 0.750                          & 0.983                         \\
Chinatown                    & Traffic    & 0.527                      & 0.526                   & 0.526                      & 0.633                   & \textbf{0.787}             & 0.569                    & 0.513                    & 0.527                   & 0.582                          & 0.592                         \\
ChlorineConcentration        & Sensor     & 0.527                      & 0.529                   & 0.528                      & 0.535                   & 0.528                      & 0.532                    & 0.535                    & 0.506                   & 0.507                          & \textbf{0.536}                \\
CinCECGTorso                 & Sensor     & 0.677                      & 0.712                   & 0.626                      & 0.667                   & 0.752                      & 0.586                    & 0.669                    & \textbf{0.779}          & 0.670                          & 0.705                         \\
Coffee                       & Spectro    & 0.751                      & 0.805                   & 0.751                      & 0.834                   & 0.805                      & 0.777                    & 0.492                    & 0.492                   & 0.750                          & \textbf{1.000}                \\
Computers                    & Device     & 0.500                      & 0.499                   & 0.529                      & \textbf{0.550}          & 0.529                      & 0.523                    & 0.502                    & 0.511                   & 0.546                          & 0.532                         \\
CricketX                     & Motion     & 0.833                      & 0.368                   & \textbf{0.870}             & 0.867                   & 0.856                      & 0.828                    & 0.845                    & 0.846                   & 0.860                          & 0.868                         \\
CricketY                     & Motion     & 0.854                      & 0.514                   & \textbf{0.876}             & 0.871                   & 0.856                      & 0.805                    & 0.858                    & 0.848                   & 0.869                          & 0.868                         \\
CricketZ                     & Motion     & 0.857                      & 0.267                   & \textbf{0.873}             & 0.865                   & 0.858                      & 0.802                    & 0.853                    & 0.847                   & 0.860                          & 0.869                         \\
Crop                         & Image      & 0.934                      & 0.042                   & 0.930                      & 0.933                   & 0.916                      & 0.920                    & 0.042                    & 0.874                   & \textbf{0.941}                 & \textbf{0.941}                \\
DiatomSizeReduction          & Image      & 0.928                      & 0.928                   & \textbf{0.985}             & 0.911                   & 0.296                      & 0.951                    & 0.752                    & 0.920                   & 0.810                          & 0.945                         \\
DistalPhalanxOutlineAgeGroup & Image      & 0.610                      & 0.615                   & 0.725                      & 0.630                   & 0.741                      & \textbf{0.751}           & 0.465                    & 0.707                   & 0.620                          & 0.711                         \\
DistalPhalanxOutlineCorrect  & Image      & 0.499                      & 0.499                   & 0.499                      & 0.500                   & 0.501                      & 0.502                    & \textbf{0.526}           & 0.521                   & 0.501                          & 0.499                         \\
DistalPhalanxTW              & Image      & 0.873                      & 0.880                   & 0.770                      & 0.742                   & \textbf{0.901}             & 0.883                    & 0.297                    & 0.797                   & 0.777                          & 0.746                         \\
DodgerLoopDay                & Sensor     & 0.763                      & 0.767                   & 0.776                      & 0.799                   & 0.743                      & 0.766                    & 0.771                    & \textbf{0.806}          & 0.779                          & 0.794                         \\
DodgerLoopGame               & Sensor     & 0.503                      & \textbf{0.639}          & 0.585                      & 0.517                   & 0.529                      & 0.515                    & 0.556                    & 0.597                   & 0.502                          & 0.529                         \\
DodgerLoopWeekend            & Sensor     & \textbf{0.975}             & 0.963                   & 0.499                      & 0.707                   & 0.817                      & 0.950                    & 0.551                    & 0.622                   & 0.963                          & 0.715                         \\
Earthquakes                  & Sensor     & 0.501                      & 0.518                   & 0.502                      & 0.499                   & 0.500                      & 0.501                    & 0.499                    & 0.375                   & \textbf{0.861}                 & 0.500                         \\
ECG200                       & ECG        & 0.618                      & 0.613                   & 0.613                      & 0.604                   & 0.618                      & 0.556                    & \textbf{0.644}           & 0.517                   & 0.537                          & 0.604                         \\
ECG5000                      & ECG        & 0.749                      & 0.562                   & \textbf{0.797}             & 0.638                   & 0.787                      & 0.706                    & 0.468                    & 0.645                   & 0.625                          & 0.672                         \\
ECGFiveDays                  & ECG        & 0.500                      & 0.871                   & \textbf{0.879}             & 0.578                   & 0.530                      & 0.511                    & 0.506                    & 0.596                   & 0.761                          & 0.531                         \\
ElectricDevices              & Device     & 0.711                      & 0.598                   & 0.727                      & 0.773                   & 0.778                      & 0.781                    & 0.182                    & 0.790                   & \textbf{0.850}                 & 0.791                         \\
EOGHorizontalSignal          & EOG        & 0.857                      & 0.422                   & \textbf{0.877}             & 0.869                   & 0.866                      & 0.797                    & 0.082                    & 0.855                   & 0.569                          & 0.874                         \\
EOGVerticalSignal            & EOG        & 0.856                      & 0.603                   & \textbf{0.876}             & 0.870                   & 0.851                      & 0.800                    & 0.855                    & 0.838                   & 0.786                          & 0.861                         \\
EthanolLevel                 & Spectro    & 0.623                      & 0.621                   & 0.623                      & 0.626                   & 0.606                      & 0.553                    & 0.613                    & \textbf{0.689}          & 0.621                          & 0.627                         \\
FaceAll                      & Image      & 0.882                      & 0.899                   & 0.906                      & \textbf{0.922}          & \textbf{0.922}             & 0.858                    & 0.085                    & 0.847                   & 0.908                          & 0.903                         \\
FaceFour                     & Image      & 0.715                      & 0.393                   & 0.775                      & \textbf{1.000}          & 0.797                      & 0.305                    & 0.758                    & 0.786                   & 0.815                          & 0.844                         \\
FacesUCR                     & Image      & 0.879                      & 0.295                   & 0.913                      & \textbf{0.938}          & 0.914                      & 0.872                    & 0.886                    & 0.854                   & 0.909                          & 0.897                         \\
FiftyWords                   & Image      & 0.946                      & 0.757                   & 0.954                      & 0.955                   & 0.956                      & 0.812                    & 0.936                    & 0.880                   & \textbf{0.957}                 & 0.952                         \\
Fish                         & Image      & 0.786                      & 0.797                   & 0.777                      & \textbf{0.861}          & 0.711                      & 0.776                    & 0.764                    & 0.793                   & 0.823                          & 0.800                         \\
FordA                        & Sensor     & 0.500                      & 0.505                   & \textbf{0.578}             & 0.501                   & 0.500                      & 0.500                    & 0.500                    & 0.510                   & 0.500                          & 0.501                         \\
FordB                        & Sensor     & 0.500                      & 0.500                   & \textbf{0.527}             & 0.503                   & 0.500                      & 0.516                    & 0.501                    & 0.501                   & 0.526                          & 0.500                         \\
FreezerRegularTrain          & Sensor     & 0.644                      & 0.636                   & 0.639                      & \textbf{0.698}          & 0.639                      & 0.635                    & 0.500                    & 0.537                   & 0.641                          & 0.639                         \\
FreezerSmallTrain            & Sensor     & 0.645                      & 0.621                   & 0.639                      & \textbf{0.669}          & 0.639                      & 0.635                    & 0.500                    & 0.540                   & 0.642                          & 0.639                         \\
Fungi                        & HRM        & 0.938                      & 0.794                   & 0.798                      & 0.990                   & 0.993                      & 0.398                    & 0.959                    & 0.926                   & \textbf{0.999}                 & 0.976                         \\
GestureMidAirD1              & Trajectory & 0.933                      & 0.114                   & 0.939                      & 0.945                   & 0.945                      & 0.904                    & 0.934                    & 0.884                   & \textbf{0.953}                 & 0.939                         \\
GestureMidAirD2              & Trajectory & 0.907                      & 0.178                   & 0.912                      & 0.947                   & 0.932                      & 0.924                    & 0.925                    & 0.886                   & \textbf{0.948}                 & 0.945                         \\
GestureMidAirD3              & Trajectory & 0.918                      & 0.080                   & 0.927                      & 0.937                   & 0.927                      & 0.900                    & 0.932                    & 0.879                   & \textbf{0.940}                 & 0.920                         \\
GesturePebbleZ1              & Sensor     & 0.802                      & 0.213                   & 0.870                      & \textbf{0.904}          & 0.795                      & 0.750                    & 0.838                    & 0.841                   & 0.832                          & 0.798                         \\
GesturePebbleZ2              & Sensor     & 0.818                      & 0.165                   & 0.830                      & \textbf{0.900}          & 0.801                      & 0.722                    & 0.786                    & 0.876                   & 0.832                          & 0.795                         \\
GunPoint                     & Motion     & 0.497                      & 0.507                   & 0.497                      & 0.497                   & 0.498                      & 0.497                    & 0.498                    & \textbf{0.530}          & 0.497                          & 0.497                         \\
GunPointAgeSpan              & Motion     & \textbf{0.628}             & 0.518                   & 0.530                      & 0.514                   & 0.546                      & 0.518                    & 0.499                    & 0.559                   & 0.499                          & 0.499                         \\
GunPointMaleVersusFemale     & Motion     & 0.500                      & 0.737                   & \textbf{0.796}             & 0.649                   & 0.500                      & 0.750                    & 0.513                    & 0.580                   & 0.499                          & 0.617                         \\
GunPointOldVersusYoung       & Motion     & 0.589                      & 0.506                   & 0.518                      & 0.500                   & 0.509                      & 0.519                    & 0.507                    & 0.534                   & \textbf{1.000}                 & 0.619                         \\
Ham                          & Spectro    & \textbf{0.532}             & \textbf{0.532}          & \textbf{0.532}             & 0.510                   & 0.521                      & 0.506                    & 0.498                    & 0.514                   & 0.520                          & 0.512                         \\
HandOutlines                 & Image      & 0.673                      & \textbf{0.684}          & 0.679                      & 0.558                   & 0.575                      & 0.581                    & 0.665                    & 0.518                   & 0.539                          & 0.581                         \\
Haptics                      & Motion     & 0.693                      & 0.700                   & 0.700                      & 0.718                   & 0.612                      & 0.618                    & 0.700                    & \textbf{0.735}          & 0.683                          & 0.703                         \\
Herring                      & Image      & 0.500                      & 0.499                   & 0.504                      & 0.504                   & 0.506                      & 0.499                    & 0.504                    & 0.489                   & 0.502                          & \textbf{0.508}                \\
HouseTwenty                  & Device     & 0.592                      & 0.498                   & 0.521                      & 0.527                   & 0.576                      & \textbf{0.742}           & 0.521                    & 0.515                   & 0.675                          & 0.534                         \\
InlineSkate                  & Motion     & 0.736                      & 0.760                   & 0.749                      & 0.759                   & 0.693                      & 0.669                    & 0.749                    & \textbf{0.763}          & 0.738                          & 0.759                         \\
InsectEPGRegularTrain        & EPG        & 0.639                      & 0.523                   & 0.709                      & 0.718                   & 0.767                      & 0.648                    & 0.596                    & 0.613                   & 0.753                          & \textbf{1.000}                \\
InsectEPGSmallTrain          & EPG        & 0.564                      & 0.574                   & 0.707                      & 0.732                   & 0.772                      & 0.629                    & 0.628                    & 0.635                   & 0.775                          & \textbf{1.000}                \\
InsectWingbeatSound          & Sensor     & \textbf{0.887}             & 0.861                   & 0.835                      & 0.879                   & 0.886                      & 0.753                    & 0.873                    & 0.883                   & 0.885                          & 0.873                         \\
ItalyPowerDemand             & Sensor     & 0.500                      & 0.505                   & \textbf{0.781}             & 0.635                   & 0.502                      & 0.501                    & 0.777                    & 0.509                   & 0.500                          & 0.552                         \\
LargeKitchenAppliances       & Device     & 0.564                      & 0.386                   & 0.634                      & 0.568                   & 0.575                      & 0.627                    & 0.558                    & \textbf{0.639}          & 0.557                          & 0.604                         \\ \bottomrule
\end{tabular}}
\end{table*}

\begin{table*}[]
\caption{(Continue) Comparing RandomNet with the state-of-the-art methods on the UCR Archive benchmarks\\ \dag \ indicates that this dataset is used for the hyperparameter selection}
\label{tabe:compareSOA2}
\scalebox{0.63}{\begin{tabular}{@{}llcccccccccc@{}}
\toprule
Datasets                                   & Data Type    & k-means & KSC            & k-shape         & SPF            & SPIRAL         & kDBA           & IDEC           & DTC            & MiniR          & Ours           \\ \midrule
Lightning2                                 & Sensor       & 0.499  & 0.497          & 0.517          & 0.538          & \textbf{0.560} & 0.508          & 0.517          & 0.536          & 0.523          & 0.533          \\
Lightning7                                 & Sensor       & 0.790  & 0.555          & 0.794          & 0.809          & 0.796          & 0.796          & 0.789          & 0.805          & 0.812          & \textbf{0.814} \\
Mallat                                     & Simulated    & 0.968  & 0.960          & 0.920          & 0.949          & 0.985          & 0.930          & 0.921          & 0.923          & 0.962          & \textbf{0.989} \\
Meat                                       & Spectro      & 0.785  & 0.785          & 0.729          & 0.830          & 0.852          & 0.800          & 0.328          & 0.578          & 0.768          & \textbf{0.860} \\
MedicalImages                              & Image        & 0.665  & 0.556          & 0.668          & 0.677          & 0.668          & \textbf{0.684} & 0.682          & 0.651          & 0.678          & 0.677          \\
MelbournePedestrian                        & Traffic      & 0.881  & 0.000          & 0.861          & 0.876          & 0.864          & 0.848          & 0.100          & 0.436          & \textbf{0.909} & 0.876          \\
MiddlePhalanxOutlineAgeGroup               & Image        & 0.733  & 0.733          & 0.733          & 0.701          & 0.733          & 0.630          & 0.382          & 0.719          & 0.720          & \textbf{0.735} \\
MiddlePhalanxOutlineCorrect                & Image        & 0.500  & 0.500          & 0.500          & 0.500          & 0.499          & 0.499          & \textbf{0.529} & 0.503          & 0.499          & 0.506          \\
MiddlePhalanxTW                            & Image        & 0.738  & 0.774          & 0.785          & 0.770          & 0.772          & 0.771          & 0.765          & 0.393          & 0.764          & \textbf{0.786} \\
MixedShapesRegularTrain                    & Image        & 0.819  & 0.720          & 0.789          & 0.802          & 0.817          & 0.696          & 0.208          & 0.825          & \textbf{0.848} & 0.824          \\
MixedShapesSmallTrain                      & Image        & 0.820  & 0.730          & 0.790          & 0.774          & 0.816          & 0.690          & 0.211          & 0.826          & \textbf{0.847} & 0.825          \\
MoteStrain                                 & Sensor       & 0.718  & 0.563          & 0.803          & 0.698          & 0.818          & 0.518          & 0.630          & 0.661          & \textbf{0.852} & 0.759          \\
NonInvasiveFetalECGThorax1                 & ECG          & 0.952  & 0.953          & 0.958          & 0.968          & 0.952          & 0.958          & 0.933          & 0.824          & \textbf{0.976} & 0.975          \\
NonInvasiveFetalECGThorax2                 & ECG          & 0.961  & 0.964          & 0.962          & 0.971          & 0.957          & 0.953          & 0.945          & 0.865          & 0.978          & \textbf{0.981} \\
OliveOil                                   & Spectro      & 0.739  & 0.845          & 0.745          & \textbf{0.875} & 0.872          & 0.828          & 0.288          & 0.288          & 0.775          & 0.811          \\
OSULeaf                                    & Image        & 0.752  & 0.455          & 0.791          & \textbf{0.793} & 0.754          & 0.716          & 0.731          & 0.789          & 0.691          & 0.763          \\
PhalangesOutlinesCorrect                   & Image        & 0.505  & 0.506          & 0.505          & 0.504          & 0.504          & 0.502          & \textbf{0.539} & 0.511          & 0.501          & 0.506          \\
Phoneme                                    & Sensor       & 0.911  & 0.491          & 0.929          & 0.930          & 0.928          & 0.789          & 0.922          & 0.082          & 0.928          & \textbf{0.932} \\
PickupGestureWiimoteZ                      & Sensor       & 0.846  & 0.233          & 0.887          & 0.888          & 0.836          & 0.889          & 0.860          & 0.866          & 0.886          & \textbf{0.895} \\
PigAirwayPressure                          & Hemodynamics & 0.914  & 0.016          & 0.903          & 0.936          & 0.960          & 0.840          & 0.938          & 0.883          & 0.967          & \textbf{0.969} \\
PigArtPressure                             & Hemodynamics & 0.965  & 0.889          & 0.958          & 0.969          & 0.954          & 0.722          & 0.964          & 0.892          & \textbf{0.986} & 0.973          \\
PigCVP                                     & Hemodynamics & 0.944  & 0.603          & 0.962          & \textbf{0.967} & 0.938          & 0.857          & 0.946          & 0.882          & \textbf{0.967} & 0.964          \\
PLAID                                      & Device       & 0.816  & 0.122          & 0.772          & \textbf{0.841} & 0.819          & 0.809          & 0.823          & 0.799          & 0.779          & 0.775          \\
Plane                                      & Sensor       & 0.872  & 0.945          & 0.920          & 0.992          & 0.952          & 0.918          & 0.892          & 0.946          & \textbf{0.995} & \textbf{0.995} \\
PowerCons                                  & Power        & 0.547  & 0.512          & 0.590          & 0.572          & 0.501          & 0.502          & 0.707          & 0.567          & 0.760          & \textbf{0.768} \\
ProximalPhalanxOutlineAgeGroup             & Image        & 0.690  & 0.780          & 0.780          & 0.696          & \textbf{0.806} & 0.688          & 0.390          & 0.770          & 0.700          & 0.792          \\
ProximalPhalanxOutlineCorrect              & Image        & 0.534  & 0.533          & 0.535          & 0.523          & 0.529          & 0.525          & \textbf{0.564} & 0.519          & 0.524          & 0.527          \\
ProximalPhalanxTW                          & Image        & 0.758  & 0.790          & \textbf{0.862} & 0.743          & 0.799          & 0.787          & 0.786          & 0.289          & 0.766          & 0.763          \\
RefrigerationDevices                       & Device       & 0.555  & 0.332          & 0.556          & 0.558          & 0.587          & \textbf{0.588} & 0.554          & 0.518          & 0.538          & 0.580          \\
Rock                                       & Spectrum     & 0.664  & 0.380          & 0.689          & 0.719          & 0.657          & 0.675          & 0.660          & 0.722          & \textbf{0.734} & 0.695          \\
ScreenType                                 & Device       & 0.562  & 0.332          & 0.557          & 0.568          & 0.566          & 0.525          & 0.562          & \textbf{0.635} & 0.559          & 0.569          \\
SemgHandGenderCh2                          & Spectrum     & 0.552  & 0.549          & 0.546          & 0.513          & 0.502          & \textbf{0.561} & 0.526          & 0.508          & 0.501          & 0.514          \\
SemgHandMovementCh2                        & Spectrum     & 0.735  & 0.638          & 0.739          & 0.756          & 0.604          & 0.732          & 0.743          & \textbf{0.783} & 0.762          & 0.743          \\
SemgHandSubjectCh2                         & Spectrum     & 0.734  & 0.645          & 0.721          & 0.734          & 0.568          & 0.661          & 0.730          & \textbf{0.797} & 0.698          & 0.700          \\
ShakeGestureWiimoteZ                       & Sensor       & 0.863  & 0.249          & 0.909          & \textbf{0.932} & 0.870          & 0.883          & 0.856          & 0.875          & 0.905          & 0.908          \\
ShapeletSim                                & Simulated    & 0.502  & 0.498          & 0.691          & 0.518          & 0.498          & 0.499          & 0.498          & 0.506          & 0.500          & \textbf{0.703} \\
ShapesAll                                  & Image        & 0.958  & 0.631          & 0.976          & \textbf{0.982} & 0.972          & 0.881          & 0.953          & 0.870          & \textbf{0.982} & 0.981          \\
SmallKitchenAppliances                     & Device       & 0.558  & 0.523          & 0.506          & 0.636          & 0.645          & 0.582          & 0.556          & 0.641          & 0.602          & \textbf{0.665} \\
SmoothSubspace                             & Simulated    & 0.709  & 0.333          & 0.682          & 0.645          & \textbf{0.896} & 0.629          & 0.585          & 0.638          & 0.631          & 0.816          \\
SonyAIBORobotSurface1                      & Sensor       & 0.800  & 0.825          & 0.501          & 0.732          & 0.756          & 0.860          & 0.507          & 0.623          & \textbf{0.907} & 0.830          \\
SonyAIBORobotSurface2                      & Sensor       & 0.662  & \textbf{0.705} & 0.526          & 0.699          & 0.647          & 0.550          & 0.555          & 0.574          & 0.690          & 0.681          \\
StarLightCurves                            & Sensor       & 0.770  & 0.769          & 0.767          & 0.717          & 0.774          & \textbf{0.782} & 0.764          & 0.643          & 0.751          & 0.720          \\
Strawberry \dag             & Spectro      & 0.504  & 0.504          & 0.504          & 0.529          & 0.506          & 0.503          & 0.515          & 0.515          & 0.505          & \textbf{0.533} \\
SwedishLeaf \dag            & Image        & 0.879  & 0.911          & 0.910          & 0.919          & 0.831          & 0.879          & 0.888          & 0.801          & \textbf{0.940} & 0.930          \\
Symbols \dag                & Image        & 0.900  & 0.610          & 0.928          & 0.970          & 0.898          & 0.885          & 0.899          & 0.898          & \textbf{0.987} & 0.919          \\
SyntheticControl \dag       & Simulated    & 0.865  & 0.507          & 0.901          & \textbf{0.986} & 0.886          & 0.941          & 0.821          & 0.885          & 0.962          & 0.981          \\
ToeSegmentation1 \dag       & Motion       & 0.499  & 0.534          & 0.505          & \textbf{0.602} & 0.499          & 0.545          & 0.498          & 0.502          & 0.500          & 0.592          \\
ToeSegmentation2 \dag       & Motion       & 0.498  & 0.558          & \textbf{0.665} & 0.586          & 0.501          & 0.563          & 0.507          & 0.421          & 0.616          & 0.563          \\
Trace \dag                  & Sensor       & 0.750  & 0.744          & 0.246          & \textbf{0.981} & 0.750          & 0.860          & 0.750          & 0.751          & 0.751          & 0.749          \\
TwoLeadECG \dag             & ECG          & 0.502  & 0.543          & 0.540          & \textbf{0.650} & 0.502          & 0.558          & 0.500          & 0.506          & 0.503          & 0.502          \\
TwoPatterns \dag            & Simulated    & 0.628  & 0.537          & 0.675          & 0.693          & 0.656          & \textbf{0.945} & 0.630          & 0.705          & 0.638          & 0.625          \\
UMD \dag                    & Simulated    & 0.557  & 0.559          & 0.612          & 0.622          & 0.626          & 0.557          & 0.614          & \textbf{0.682} & 0.616          & 0.621          \\
UWaveGestureLibraryAll \dag & Motion       & 0.897  & 0.705          & 0.910          & 0.852          & 0.897          & 0.800          & 0.914          & \textbf{0.927} & 0.905          & 0.864          \\
UWaveGestureLibraryX \dag   & Motion       & 0.856  & 0.413          & 0.857          & 0.858          & 0.863          & 0.849          & 0.856          & \textbf{0.872} & 0.862          & 0.857          \\
UWaveGestureLibraryY \dag   & Motion       & 0.849  & 0.579          & 0.830          & 0.831          & 0.848          & 0.823          & 0.848          & \textbf{0.857} & 0.853          & 0.843          \\
UWaveGestureLibraryZ \dag   & Motion       & 0.845  & 0.517          & 0.852          & 0.844          & 0.845          & 0.748          & 0.849          & \textbf{0.857} & 0.845          & 0.849          \\
Wafer \dag                  & Sensor       & 0.535  & 0.509          & 0.537          & 0.505          & 0.534          & 0.534          & \textbf{0.538} & 0.298          & 0.534          & 0.508          \\
Wine \dag                   & Spectro      & 0.496  & 0.496          & 0.496          & 0.500          & 0.495          & 0.496          & 0.496          & 0.496          & \textbf{0.503} & 0.502          \\
WordSynonyms \dag           & Image        & 0.897  & 0.765          & 0.894          & 0.901          & 0.903          & 0.783          & 0.885          & 0.848          & \textbf{0.904} & 0.901          \\
Worms \dag                  & Motion       & 0.646  & 0.520          & 0.656          & 0.683          & 0.666          & 0.644          & 0.263          & \textbf{0.701} & 0.648          & 0.687          \\
WormsTwoClass \dag          & Motion       & 0.499  & 0.509          & 0.506          & 0.523          & 0.499          & 0.498          & 0.510          & 0.492          & 0.521          & \textbf{0.529} \\
Yoga \dag                   & Image        & 0.500  & 0.500          & 0.500          & 0.500          & 0.500          & 0.502          & 0.503          & \textbf{0.504} & 0.500          & 0.501          
\\ \midrule
Average Rand Index                         &              & 0.718  & 0.561          & 0.732          & 0.742          & 0.723          & 0.694          & 0.630          & 0.680          & 0.741          & \textbf{0.750}          \\
Average rank                               &              & 6.000  & 7.074          & 4.741          & 4.023          & 5.287          & 6.690          & 6.810          & 5.898          & 4.745          & \textbf{3.732}         \\\bottomrule
\end{tabular}}
\end{table*}

\begin{table*}[]
\caption{Comparison of RandomNet with state-of-the-art methods across seven different time series data types. The values represent the average rank of each method for the respective data type. The summary provides the average rank for rankings and the counts for the top-1 and top-3.}\label{tab:data_type}
\scalebox{0.78}{
\begin{tabular}{@{}lcccccccccc@{}}
\toprule
Data Type   & \multicolumn{1}{l}{k-means} & \multicolumn{1}{c}{KSC} & \multicolumn{1}{c}{k-shape} & \multicolumn{1}{c}{SPF} & \multicolumn{1}{c}{SPIRAL} & \multicolumn{1}{c}{kDBA} & \multicolumn{1}{c}{IDEC} & \multicolumn{1}{c}{DTC} & \multicolumn{1}{c}{MiniR} & \multicolumn{1}{c}{Ours} \\ \midrule
Device      & 6.500                      & 9.333                   & 6.778                      & 3.667                   & 3.833                      & 4.667                    & 6.556                    & 5.444                   & 4.667                          & \textbf{3.556}                \\
ECG/EOG/EPG & 6.111                      & 7.056                   & 3.111                      & 4.167                   & 4.556                      & 6.722                    & 7.778                    & 7.444                   & 5.000                          & \textbf{3.056}                \\
Image       & 6.143                      & 6.464                   & 4.321                      & 4.946                   & 5.054                      & 6.839                    & 6.464                    & 6.143                   & 4.804                          & \textbf{3.821}                \\
Motion      & 6.222                      & 6.278                   & \textbf{3.556}             & 4.667                   & 6.333                      & 7.333                    & 6.333                    & 4.111                   & 5.944                          & 4.222                         \\
Sensor      & 5.982                      & 6.714                   & 4.786                      & \textbf{3.625}          & 5.500                      & 6.929                    & 6.946                    & 5.696                   & 4.607                          & 4.214                         \\
Simulated   & 5.200                      & 8.800                   & 4.600                      & 3.800                   & 5.400                      & 5.200                    & 8.400                    & 5.200                   & 6.200                          & \textbf{2.200}                \\
Spectro     & 5.750                      & 4.083                   & 5.500                      & 3.333                   & 5.083                      & 6.500                    & 8.833                    & 6.000                   & 6.917                          & \textbf{3.000}                \\ \midrule
Average rank    & 6.214                      & 7.786                   & 3.714                      & 2.571                   & 5.214                      & 7.643                    & 9.143                    & 5.929                   & 5.357                          & \textbf{1.429}                \\ 
Num.Top-1   & 0                          & 0                       & 1                          & 1                       & 0                          & 0                        & 0                        & 0                       & 0                              & \textbf{5}                             \\
Num.Top-3   & 0                          & 1                       & 4                          & 5                       & 1                          & 0                        & 0                        & 1                       & 2                              & \textbf{7}                             \\ \bottomrule
\end{tabular}}
\end{table*}

As seen in the figure, RandomNet significantly outperforms both k-means and kmeansE. It is noteworthy that kmeansE is significantly better than the standard k-means, indicating that employing ensemble methods can substantially improve the performance of time series clustering, even for the naive method that uses the original representation of time series. Comparing RandomNet with kmeansE further demonstrates that using the proposed deep neural network with random parameters for generating representations can indeed enhance the accuracy of k-means clustering and ensembles.

\begin{figure}[]
\centering
\includegraphics[width=0.93\textwidth, height=0.27\textwidth]{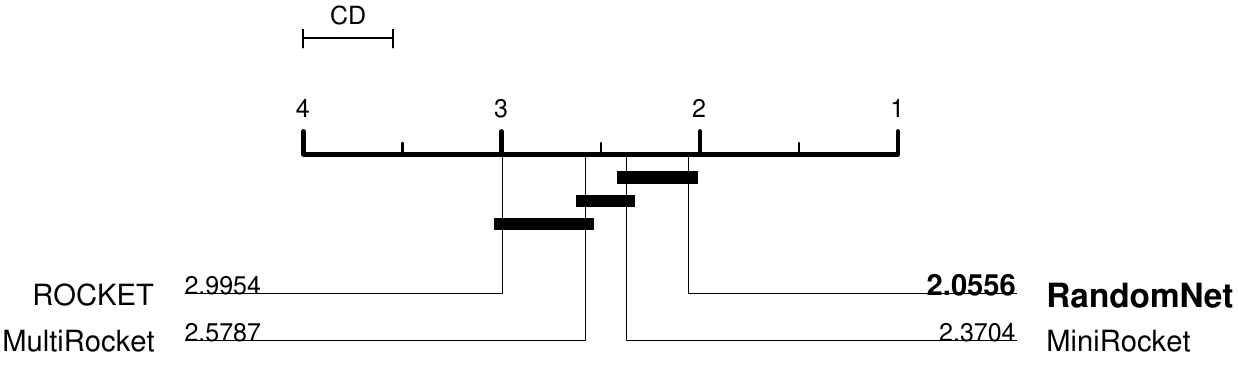}
\caption{Critical difference diagram of the comparison with ROCKET, MiniRocket and MultiRocket.}
\label{fig:rockets}
\end{figure}

\textbf{Comparison with ROCKET and its variants.}

We compare our method with ROCKET and its variants, MiniRocket and MultiRocket. Note that we remove the classifier components in these ROCKET variants and replace them with k-means to adapt them to our use case. As shown in Figure \ref{fig:rockets}, RandomNet outperforms ROCKET variants in terms of average rank, and is especially significantly better than ROCKET and MultiRocket. This reflects the superiority of RandomNet, which is specially designed for time series clustering, in models based on random parameters. It is worth noting that MiniRocket is the best model among ROCKET variants. Therefore, we keep only MiniRocket in subsequent experiments.

\textbf{Comparison with the state-of-the-arts.}
Table \ref{tabe:compareSOA} and \ref{tabe:compareSOA2} present the experimental results of RandomNet compared to state-of-the-art methods. The best results for each dataset are highlighted in bold. We provide the average Rand Index and average rank for each method. The $\dag$ symbol indicates that the dataset is used for hyperparameter selection. Consequently, the results from these datasets have not been included in the computation of the average Rand Index and average rank. As the results illustrate, RandomNet achieves the highest average Rand Index and average rank amongst all the baseline methods.

Figure \ref{fig:compareSOA2} depicts the critical difference diagram of the comparison between RandomNet and the state-of-the-art methods. The figure demonstrates that RandomNet significantly outperforms k-means, KSC, kDBA, SPIRAL, and two deep learning-based methods, IDEC and DTC. It also shows our proposed method is slightly better than MiniRocket, k-shape and SPF. These results solidify that RandomNet is a state-of-the-art time series clustering method.

In order to gain insights on the strengths and weaknesses of all the 10 methods compared and see how each method performs on different types of data, we divide the time series datasets into different categories (e.g. sensor, device, motion, spectro, etc). For each dataset, we rank the results of the 10 methods as we did in previous comparisons (1 is the best and 10 is the worst), and compute the average ranking of each method for each category. The ranking results are shown in Table \ref{tab:data_type}. Note we only include categories with at least five datasets. The best-performing method for each category is in bold. Next, we rank these average rankings for each category (e.g. for Device, our method has the ranking of 1 since it has the lowest average rank, whereas KSC has the ranking of 10 since it has the highest average rank). We then average these category-wise rankings and report them in the line \textit{Average rank}). For example, the rankings of our method for the 7 categories are: 1, 1, 1, 3, 2, 1, 1, respectively, with an average rank of 1.429. The last two lines show the numbers of categories in which the Rand index of a method is among top-1 and top-3, respectively.

Our model achieves the best average rank and is the best in five data types. It is also among the top three in all data types. This demonstrates the superiority of our model compared to other models across diverse data types. This can be attributed to its ability to generate diverse representations and its ensemble mechanism, which effectively cancels out irrelevant representations.

In contrast, other methods exhibit varying performance due to their specific focuses such as local shape or point-to-point distance computation, which may limit their effectiveness to only work on certain data types. For example, k-shape ranks ninth on the device data (where RandomNet ranks first), and SPF achieves an intermediate rank (fourth) on both image and motion data (where RandomNet ranks first and third, respectively). These results indicate that while specific models may perform well in certain data types, their performance can be suboptimal in others due to focus limitations.

\begin{figure}[]
\centering
\includegraphics[width=0.9\textwidth, height=0.4\textwidth]{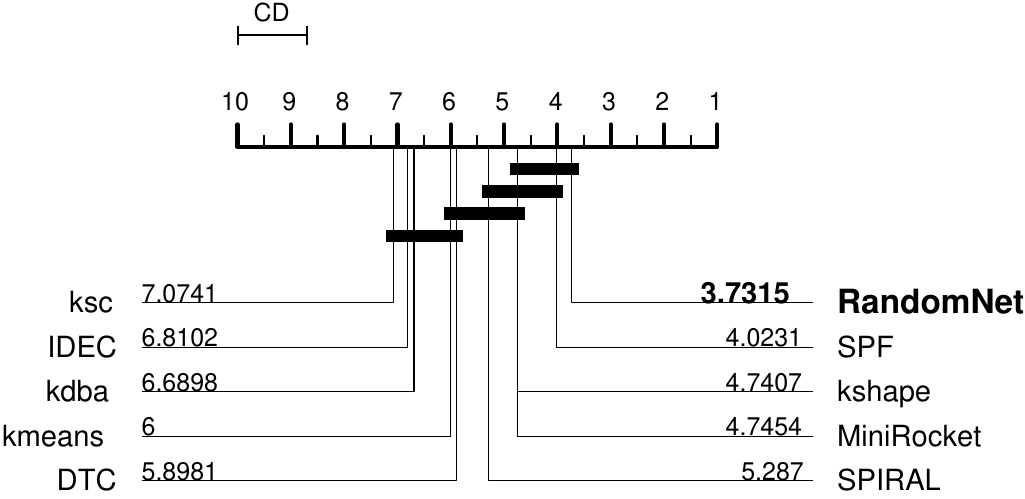}
\caption{Critical difference diagram of the comparison with state-of-the-art methods.}
\label{fig:compareSOA2}
\end{figure}

\begin{table}[]
\centering
\caption{Ablation results of RandomNet on 108 UCR datasets.}
\label{tab:ablation_study}
\begin{tabular}{lcc}
\hline
\textbf{Model Configuration} & \textbf{Average Rand Index} & \textbf{Average Rank} \\
\hline
RandomNet & \textbf{0.750} & \textbf{2.602} \\
RandomNet w/ GRU & 0.746 & 2.894 \\
RandomNet w/o LSTM & 0.747 & 2.880 \\
RandomNet w/o LSTM \& ReLU & 0.746 & 2.981 \\
RandomNet w/o LSTM \& ReLU \& Pooling & 0.738 & 3.644 \\
\hline
\end{tabular}
\label{tab:ablation}
\end{table}

\subsection{Ablation study}
To verify the effectiveness of each component in RandomNet, we compare the performance of full RandomNet and its four variants on 108 UCR datasets, which are shown in Table~\ref{tab:ablation}. The four variants are, 1) RandomNet w/ GRU (replaces LSTM with GRU), 2) RandomNet w/o LSTM (removes LSTM), 3) RandomNet w/o LSTM \& ReLU (removes LSTM and ReLU), and 4) RandomNet w/o LSTM \& ReLU \& pooling (removes LSTM, ReLU and pooling).

The results show that full RandomNet is better than the four variants in average rand index and average rank, reflecting the effectiveness of each part of RandomNet. It is worth noting that pooling is important in the model. Removing pooling will significantly increase the running time and decrease the performance.

\subsection{Visualizing clusters for different methods}
Figure \ref{fig:tsne} shows the 2D embeddings of the Cylinder-Bell-Funnel (CBF) \citep{CBF} dataset using t-distributed Stochastic Neighbor Embedding (t-SNE) algorithm \citep{tsne}, as well as cluster assignments by k-means, MiniRocket, and RandomNet compared with the true labels. We can see clearly that k-means and MiniRocket both have difficulty distinguishing the blue and green classes, which correspond to the Bell and the Cylinder classes, respectively.

Upon closer examination, we can see why. Figure \ref{fig:dend} shows five instances of the CBF time series and their cluster assignments from k-means, MiniRocket, and RandomNet, respectively. All methods successfully group the red time series (Funnel) into one cluster. However, k-means and MiniRocket inaccurately cluster the blue (Bell) and green (Cylinder) time series, whereas RandomNet is able to identify the correct clusters. This is due to k-means' sensitivity to misalignment in the time series data (e.g. the blue time series), high dimensionality, and noise as it clusters based on Euclidean distances. For MiniRocket, the use of a network with random weights results in many class-independent values in its final representation, which is equivalent to adding noise from its last layer to k-means. In contrast, RandomNet uses the selection mechanism and ensemble, which weakens the influence of irrelevant representation and strengthens relevant representation, making the model more robust.

\begin{figure}[]
\centering
\includegraphics[width=0.7\textwidth]{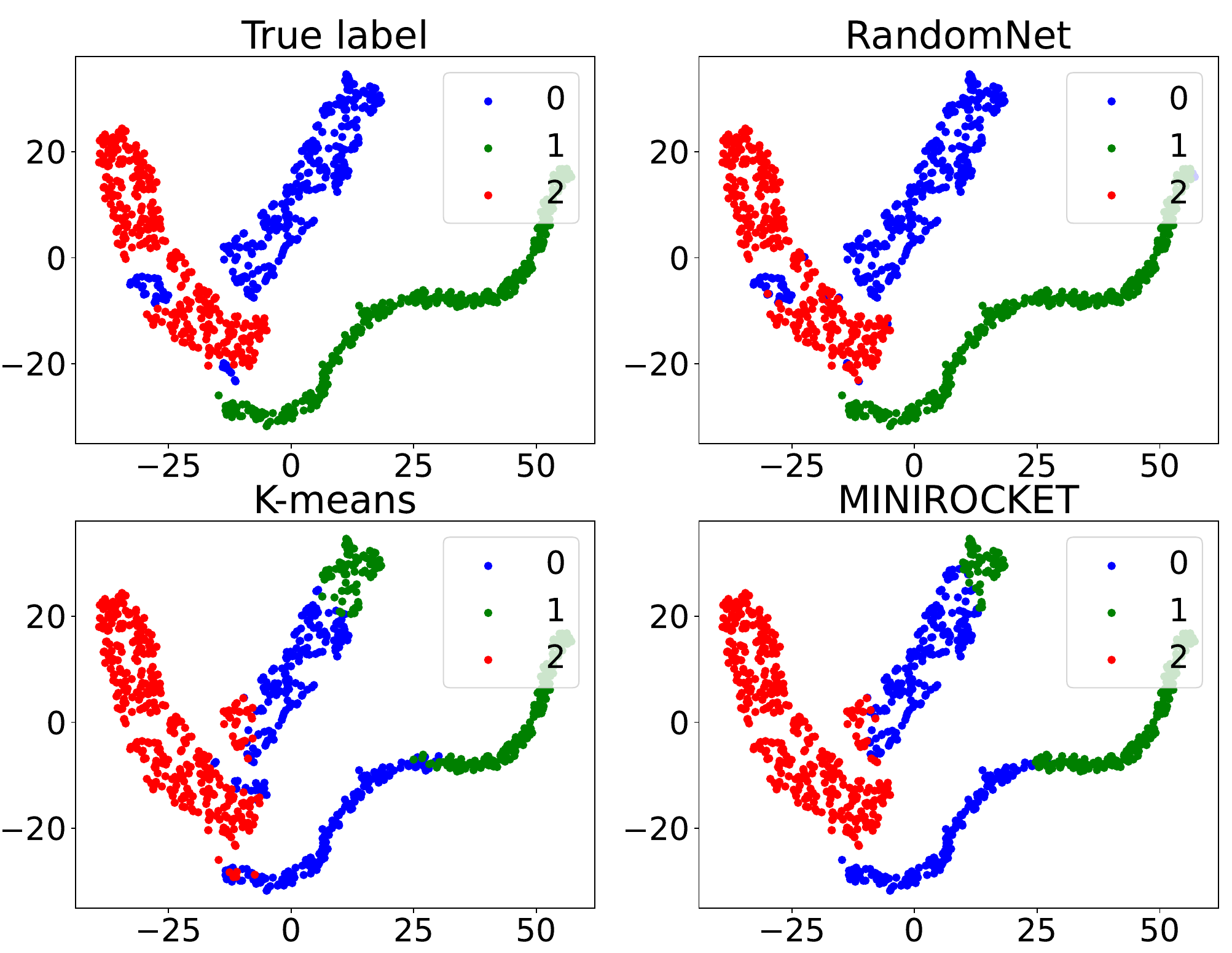}
\caption{Clusterings of CBF dataset visualized using t-SNE for RandomNet (upper right), k-means (lower left) and MiniRocket (lower right), compared with True Label (upper left).}
\label{fig:tsne}
\end{figure}

\begin{figure}[]
\centering
\includegraphics[width=0.85\textwidth]{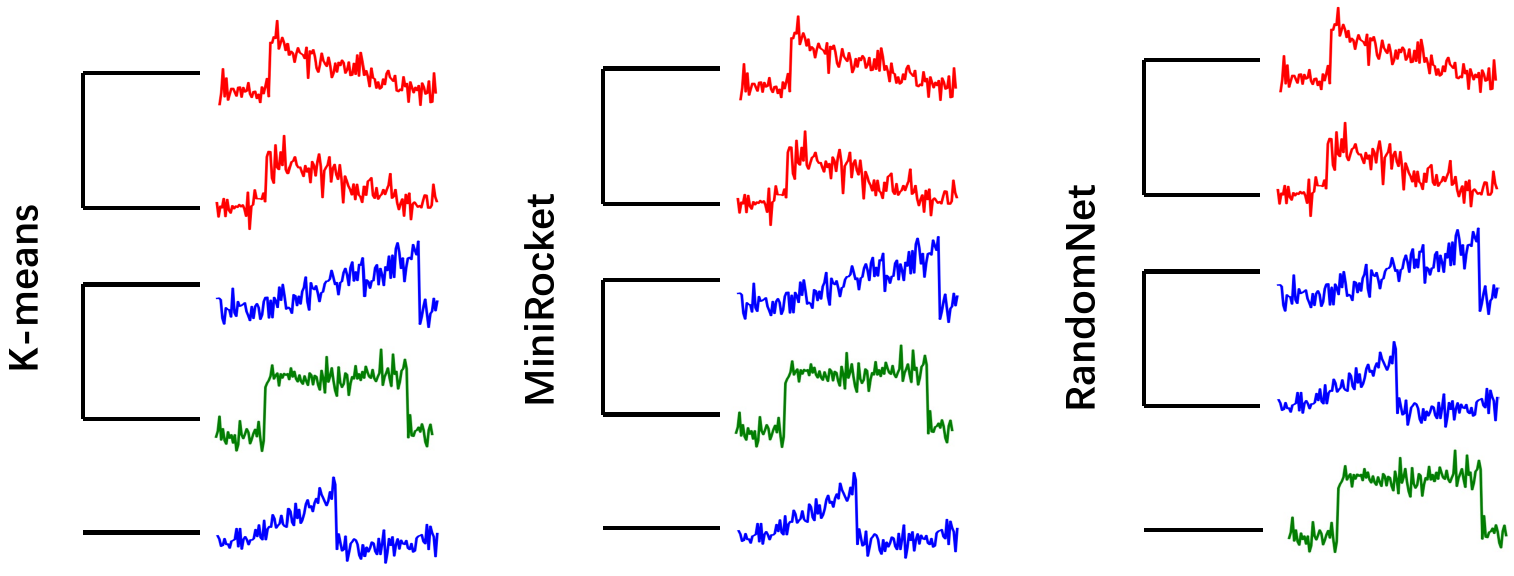}
\caption{Clustering results on five samples from the CBF dataset using k-means (left), MiniRocket (middle) and RandomNet (right).}
\label{fig:dend}
\end{figure}

\subsection{Testing the time complexity}\label{sec:time}

In real-world applications, the size of datasets and the length of time series can be huge, making linear time complexity with respect to the number of instances and length of time series an essential characteristic of any practical model. To test the scalability and effectiveness of our proposed method, we use the same mechanism to generate datasets of varying sizes. For different time series lengths, we supplement the original time series (length of 128) with random noise to reach the required length. In this experiment, we use the CBF dataset \citep{CBF}. For testing linear complexity w.r.t the number of instances, the number of instances is set from 200 to 10,000 with a fixed time series length of 100. For testing linear complexity w.r.t the length of time series, the length is set from 1,000 to 10,000 with a fixed dataset size of 120. We run RandomNet 10 times and record the average running time and Rand Index. The outcomes are presented in Figure \ref{fig:runingTime}.

\begin{figure}[]
    \centering
    \begin{subfigure}[b]{0.49\textwidth}
        \centering
        \includegraphics[width=\textwidth]{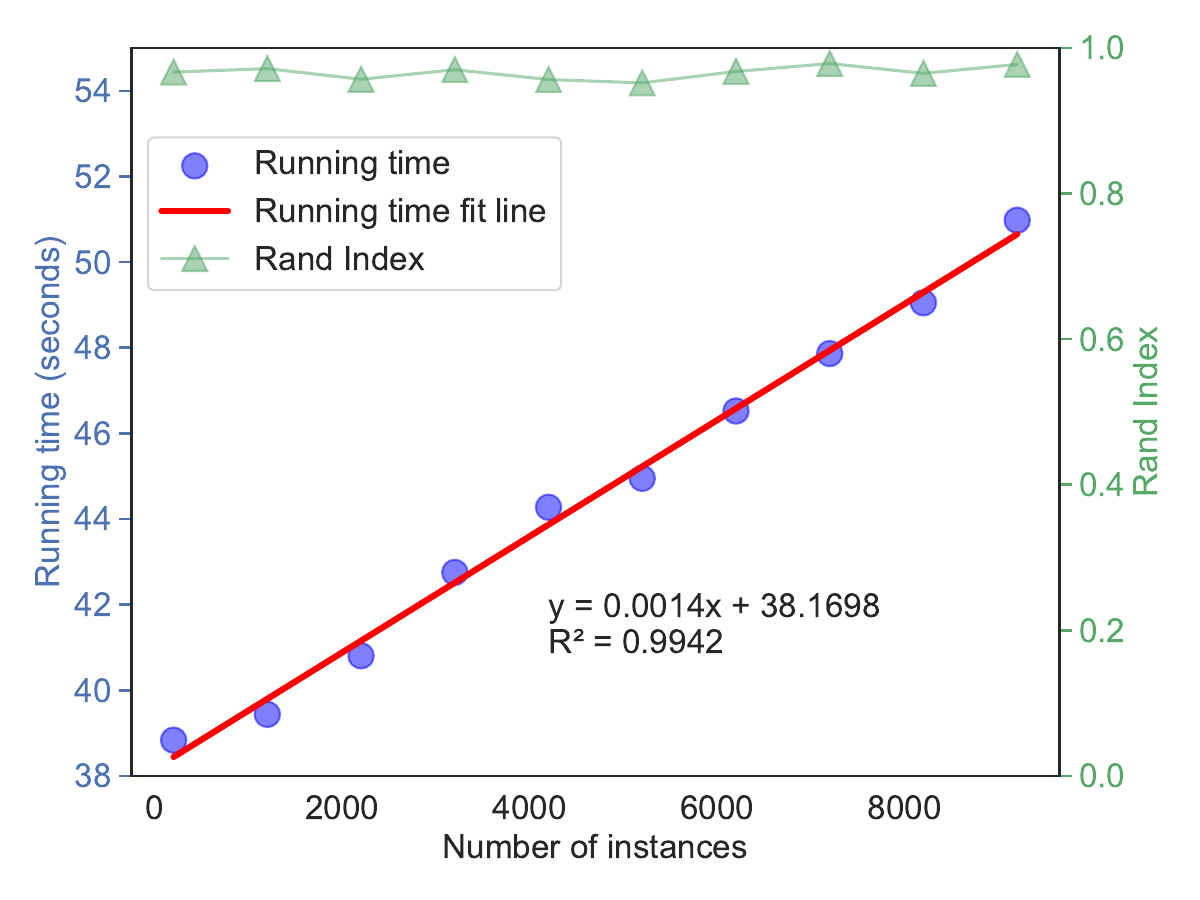}
    \end{subfigure}
    \hfill
    \begin{subfigure}[b]{0.49\textwidth}
        \centering
        \includegraphics[width=\textwidth]{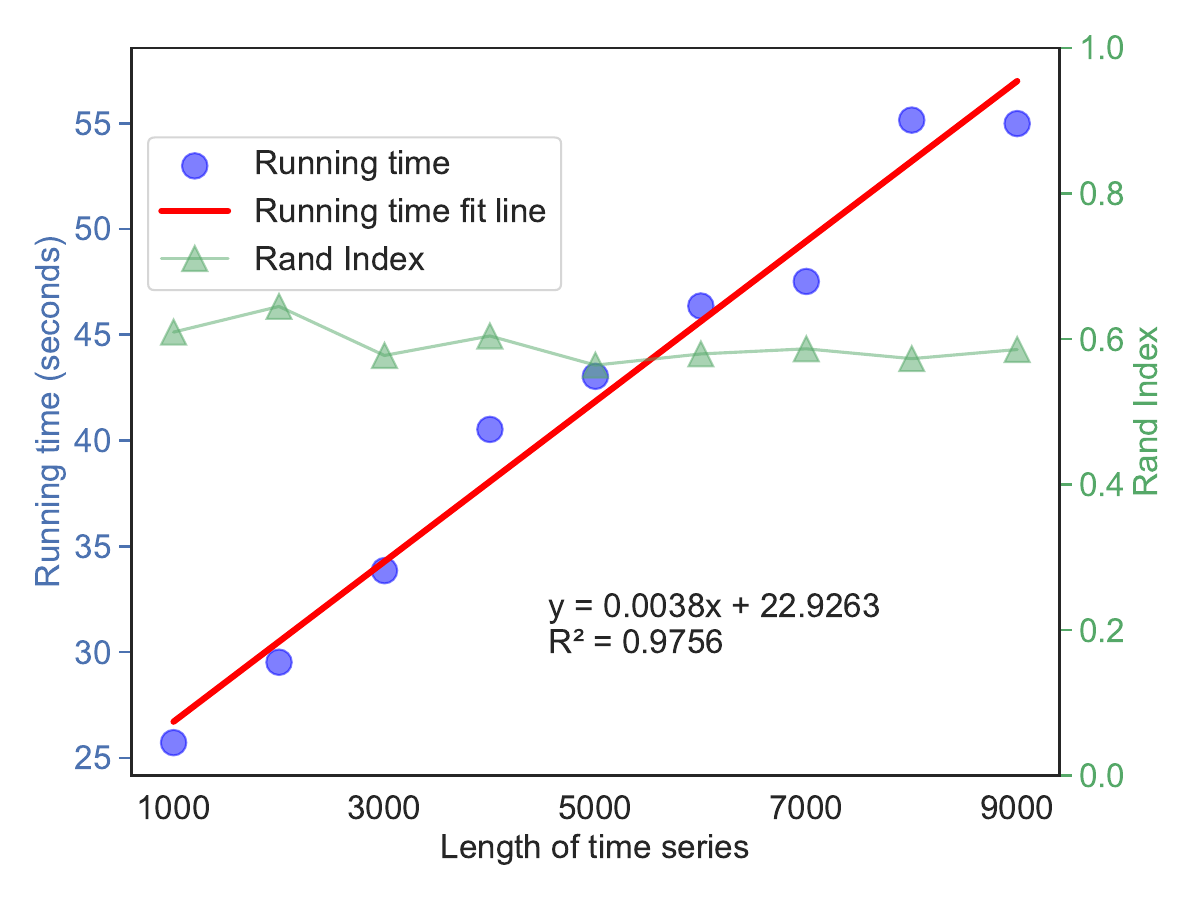}
    \end{subfigure}
    \caption{Running time of RandomNet on the different number of instances (left) and lengths of time series (right).}
    \label{fig:runingTime}
\end{figure}

In the figure, each blue dot represents the average running time corresponding to the respective number of instances or length of time series. We perform linear curve-fitting on the results, depicted by the red line. One can see from the figure that the $R^2$ value, which is the coefficient of determination of the fitting, is 0.9942 and 0.9756, respectively. The value is close to 1, indicating that the average running time of RandomNet has a strong linear relationship with the number of instances and length of time series. Moreover, we also observe stable Rand Index results across varying input sizes, indicating that our model is not sensitive to the size of the input data. Note that since we add a lot of noise (for the length of 9000, only 1.4\% are not noise), the Rand Index in the right figure drops significantly. In the next section, we will inject a reasonable proportion of noise to analyze noise sensitivity.

From Table 1, we can find that there are some models that also have the same characteristics, namely linear complexity w.r.t dataset size and time series length, such as k-means, SPF and MiniRocket, but our model is overall more accurate than these methods and has superior performance on all evaluated time series data types.

\subsection{Analyzing noise sensitivity}
We use three different datasets, SmallKitchenAppliances, ECG200, and FiftyWords, from three different application domains to test the noise sensitivity of the model. These datasets are injected with six levels of random Gaussian noise (scales of 0.05, 0.1, 0.2, 0.3, 0.4, and 0.5). This setting ensures that most values in the time series are valid, unlike in the previous section, where most values are noise. We evaluate the performance of RandomNet against the second-best model, SPF, by running each model 10 times and calculating the average Rand Index.

As illustrated in Fig. ~\ref{fig:Noise}, while both models exhibit a strong resilience to noise, our model is slightly better than SPF. For the SmallKitchenAppliances dataset, the performance of RandomNet has little effect as the noise level increases. On the contrary, the performance of SPF decreases more obviously. In the ECG200 dataset, both models experience small fluctuations in performance at different noise levels, indicating similar effects on noise in this case. For the FiftyWords dataset, both models remain highly stable and show minimal performance differences despite the introduced noise.

Overall, these observations highlight RandomNet's competitive ability to handle noise, confirming its effectiveness and robustness in noisy scenarios.

\begin{figure}
    \centering
    \includegraphics[width=1\textwidth]{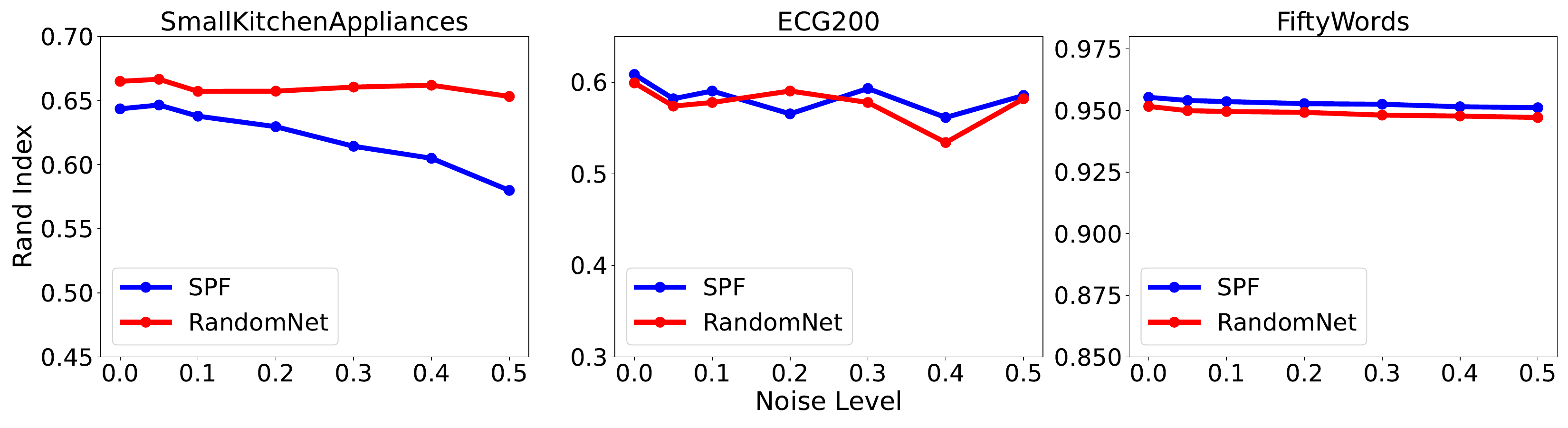}
    \caption{Sensitivity analysis of RandomNet and SPF across varying noise levels.}
    \label{fig:Noise}
\end{figure}

\subsection{Finding the optimal number of clusters}
In many real-world data mining scenarios, the true number of clusters ($k$) within the dataset is unknown, so whether the model has the ability to determine the optimal $k$ is crucial. The Elbow Method is a widely accepted heuristic used in determining the optimal $k$. It entails plotting the explained variation as a function of $k$ and picking the "elbow" of the curve as the optimal $k$ to use.

We apply the Elbow Method to the clustering performed by both k-means and RandomNet on the CBF dataset, which contains three classes. As shown in Fig. \ref{fig:elbow}, RandomNet can find an obvious "elbow" at $k=3$, whereas for k-means, it is hard to locate a clear "elbow". 

\begin{figure}[]
\centering
\includegraphics[width=0.75\textwidth]{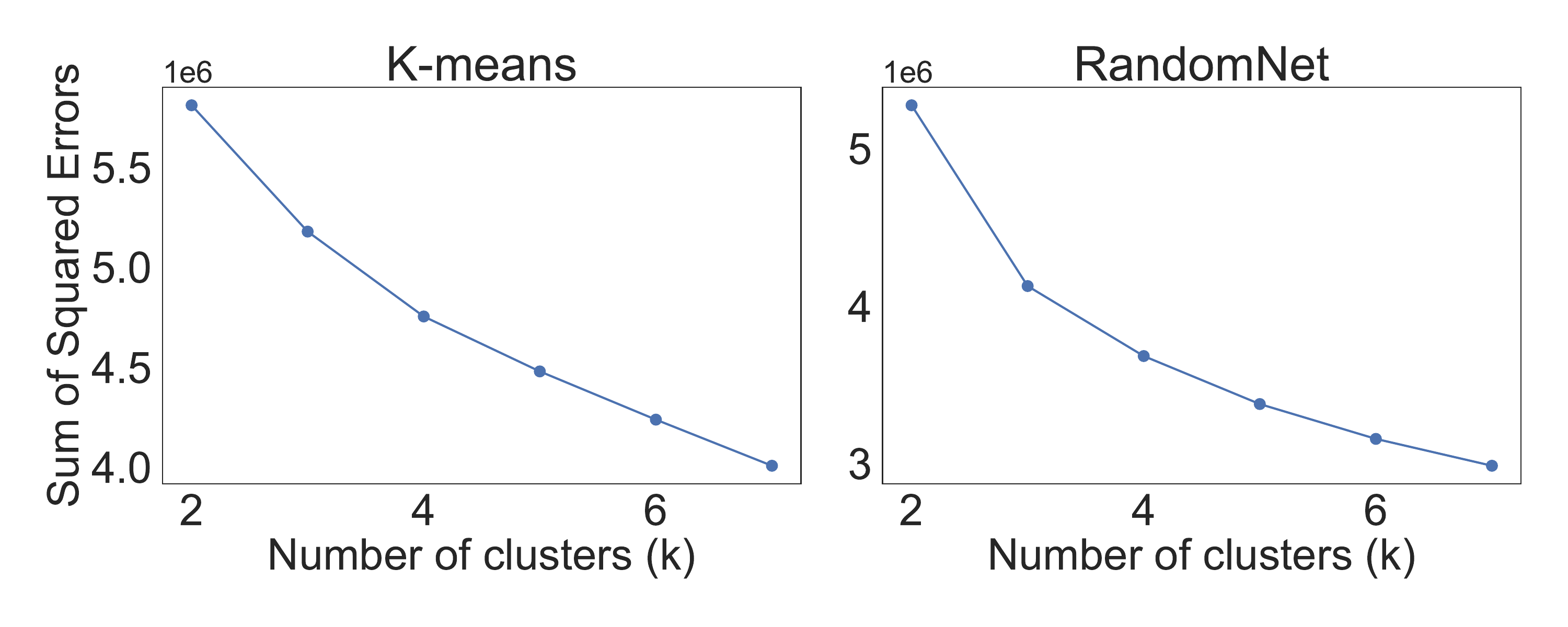}
\caption{Elbow Method test for k-means (left) and RandomNet (right) on CBF dataset.}
\label{fig:elbow}
\end{figure}

\section{Conclusion and Future Work}
\label{sec:conclusion}

In this paper, we introduces RandomNet, a novel method for time series clustering that utilizes deep neural networks with random parameters to extract diverse representations of the input time series for clustering. The data only passes through the network once, and no backpropagation is involved. The selection mechanism and ensemble in the proposed method cancel irrelevant representations out and strengthen relevant representations to provide reliable clustering. Extensive evaluations conducted across all 128 UCR datasets demonstrate competitive accuracy compared to state-of-the-art methods, as well as superior efficiency. Future research directions may involve integrating more complex or domain-specific network structures into our method. Additionally, incorporating some level of training into the framework could potentially improve performance. We will also try to explore the potential of applying our method to multivariate time series or other data types, such as image data. 


\bibliography{dmkd}


\begin{thebibliography}{38}
\ifx \bisbn   \undefined \def \bisbn  #1{ISBN #1}\fi
\ifx \binits  \undefined \def \binits#1{#1}\fi
\ifx \bauthor  \undefined \def \bauthor#1{#1}\fi
\ifx \batitle  \undefined \def \batitle#1{#1}\fi
\ifx \bjtitle  \undefined \def \bjtitle#1{#1}\fi
\ifx \bvolume  \undefined \def \bvolume#1{\textbf{#1}}\fi
\ifx \byear  \undefined \def \byear#1{#1}\fi
\ifx \bissue  \undefined \def \bissue#1{#1}\fi
\ifx \bfpage  \undefined \def \bfpage#1{#1}\fi
\ifx \blpage  \undefined \def \blpage #1{#1}\fi
\ifx \burl  \undefined \def \burl#1{\textsf{#1}}\fi
\ifx \doiurl  \undefined \def \doiurl#1{\url{https://doi.org/#1}}\fi
\ifx \betal  \undefined \def \betal{\textit{et al.}}\fi
\ifx \binstitute  \undefined \def \binstitute#1{#1}\fi
\ifx \binstitutionaled  \undefined \def \binstitutionaled#1{#1}\fi
\ifx \bctitle  \undefined \def \bctitle#1{#1}\fi
\ifx \beditor  \undefined \def \beditor#1{#1}\fi
\ifx \bpublisher  \undefined \def \bpublisher#1{#1}\fi
\ifx \bbtitle  \undefined \def \bbtitle#1{#1}\fi
\ifx \bedition  \undefined \def \bedition#1{#1}\fi
\ifx \bseriesno  \undefined \def \bseriesno#1{#1}\fi
\ifx \blocation  \undefined \def \blocation#1{#1}\fi
\ifx \bsertitle  \undefined \def \bsertitle#1{#1}\fi
\ifx \bsnm \undefined \def \bsnm#1{#1}\fi
\ifx \bsuffix \undefined \def \bsuffix#1{#1}\fi
\ifx \bparticle \undefined \def \bparticle#1{#1}\fi
\ifx \barticle \undefined \def \barticle#1{#1}\fi
\bibcommenthead
\ifx \bconfdate \undefined \def \bconfdate #1{#1}\fi
\ifx \botherref \undefined \def \botherref #1{#1}\fi
\ifx \url \undefined \def \url#1{\textsf{#1}}\fi
\ifx \bchapter \undefined \def \bchapter#1{#1}\fi
\ifx \bbook \undefined \def \bbook#1{#1}\fi
\ifx \bcomment \undefined \def \bcomment#1{#1}\fi
\ifx \oauthor \undefined \def \oauthor#1{#1}\fi
\ifx \citeauthoryear \undefined \def \citeauthoryear#1{#1}\fi
\ifx \endbibitem  \undefined \def \endbibitem {}\fi
\ifx \bconflocation  \undefined \def \bconflocation#1{#1}\fi
\ifx \arxivurl  \undefined \def \arxivurl#1{\textsf{#1}}\fi
\csname PreBibitemsHook\endcsname

\bibitem[\protect\citeauthoryear{Kumar et~al.}{2002}]{finance}
\begin{bchapter}
\bauthor{\bsnm{Kumar}, \binits{M.}},
\bauthor{\bsnm{Patel}, \binits{N.R.}},
\bauthor{\bsnm{Woo}, \binits{J.}}:
\bctitle{Clustering seasonality patterns in the presence of errors}.
In: \bbtitle{Proceedings of the Eighth ACM SIGKDD},
pp. \bfpage{557}--\blpage{563}
(\byear{2002}).
\bcomment{ACM}
\end{bchapter}
\endbibitem

\bibitem[\protect\citeauthoryear{Subhani et~al.}{2010}]{biology}
\begin{barticle}
\bauthor{\bsnm{Subhani}, \binits{N.}},
\bauthor{\bsnm{Rueda}, \binits{L.}},
\bauthor{\bsnm{Ngom}, \binits{A.}},
\bauthor{\bsnm{Burden}, \binits{C.J.}}:
\batitle{Multiple gene expression profile alignment for microarray time-series data clustering}.
\bjtitle{Bioinformatics}
\bvolume{26}(\bissue{18}),
\bfpage{2281}--\blpage{2288}
(\byear{2010})
\end{barticle}
\endbibitem

\bibitem[\protect\citeauthoryear{Fujita et~al.}{2012}]{biology2}
\begin{barticle}
\bauthor{\bsnm{Fujita}, \binits{A.}},
\bauthor{\bsnm{Severino}, \binits{P.}},
\bauthor{\bsnm{Kojima}, \binits{K.}},
\bauthor{\bsnm{Sato}, \binits{J.R.}},
\bauthor{\bsnm{Patriota}, \binits{A.G.}},
\bauthor{\bsnm{Miyano}, \binits{S.}}:
\batitle{Functional clustering of time series gene expression data by granger causality}.
\bjtitle{BMC systems biology}
\bvolume{6}(\bissue{1}),
\bfpage{137}
(\byear{2012})
\end{barticle}
\endbibitem

\bibitem[\protect\citeauthoryear{Steinbach et~al.}{2003}]{climate}
\begin{bchapter}
\bauthor{\bsnm{Steinbach}, \binits{M.}},
\bauthor{\bsnm{Tan}, \binits{P.-N.}},
\bauthor{\bsnm{Kumar}, \binits{V.}},
\bauthor{\bsnm{Klooster}, \binits{S.}},
\bauthor{\bsnm{Potter}, \binits{C.}}:
\bctitle{Discovery of climate indices using clustering}.
In: \bbtitle{Proceedings of the Ninth ACM SIGKDD},
pp. \bfpage{446}--\blpage{455}
(\byear{2003}).
\bcomment{ACM}
\end{bchapter}
\endbibitem

\bibitem[\protect\citeauthoryear{Wism{\"u}ller et~al.}{2002}]{medicine}
\begin{barticle}
\bauthor{\bsnm{Wism{\"u}ller}, \binits{A.}},
\bauthor{\bsnm{Lange}, \binits{O.}},
\bauthor{\bsnm{Dersch}, \binits{D.R.}},
\bauthor{\bsnm{Leinsinger}, \binits{G.L.}},
\bauthor{\bsnm{Hahn}, \binits{K.}},
\bauthor{\bsnm{P{\"u}tz}, \binits{B.}},
\bauthor{\bsnm{Auer}, \binits{D.}}:
\batitle{Cluster analysis of biomedical image time-series}.
\bjtitle{International Journal of Computer Vision}
\bvolume{46}(\bissue{2}),
\bfpage{103}--\blpage{128}
(\byear{2002})
\end{barticle}
\endbibitem

\bibitem[\protect\citeauthoryear{Paparrizos and Gravano}{2015}]{kshape}
\begin{bchapter}
\bauthor{\bsnm{Paparrizos}, \binits{J.}},
\bauthor{\bsnm{Gravano}, \binits{L.}}:
\bctitle{k-shape: Efficient and accurate clustering of time series}.
In: \bbtitle{Proceedings of the 2015 ACM SIGMOD International Conference on Management of Data},
pp. \bfpage{1855}--\blpage{1870}
(\byear{2015}).
\bcomment{ACM}
\end{bchapter}
\endbibitem

\bibitem[\protect\citeauthoryear{Petitjean et~al.}{2011}]{DBA}
\begin{barticle}
\bauthor{\bsnm{Petitjean}, \binits{F.}},
\bauthor{\bsnm{Ketterlin}, \binits{A.}},
\bauthor{\bsnm{Gan{\c{c}}arski}, \binits{P.}}:
\batitle{A global averaging method for dynamic time warping, with applications to clustering}.
\bjtitle{Pattern Recognition}
\bvolume{44}(\bissue{3}),
\bfpage{678}--\blpage{693}
(\byear{2011})
\end{barticle}
\endbibitem

\bibitem[\protect\citeauthoryear{Li et~al.}{2019}]{spf}
\begin{bchapter}
\bauthor{\bsnm{Li}, \binits{X.}},
\bauthor{\bsnm{Lin}, \binits{J.}},
\bauthor{\bsnm{Zhao}, \binits{L.}}:
\bctitle{Linear time complexity time series clustering with symbolic pattern forest}.
In: \bbtitle{Proceedings of the 28th International Joint Conference on Artificial Intelligence},
pp. \bfpage{2930}--\blpage{2936}
(\byear{2019}).
\bcomment{AAAI Press}
\end{bchapter}
\endbibitem

\bibitem[\protect\citeauthoryear{Dau et~al.}{2019}]{UCRArchive}
\begin{barticle}
\bauthor{\bsnm{Dau}, \binits{H.A.}},
\bauthor{\bsnm{Bagnall}, \binits{A.}},
\bauthor{\bsnm{Kamgar}, \binits{K.}},
\bauthor{\bsnm{Yeh}, \binits{C.-C.M.}},
\bauthor{\bsnm{Zhu}, \binits{Y.}},
\bauthor{\bsnm{Gharghabi}, \binits{S.}},
\bauthor{\bsnm{Ratanamahatana}, \binits{C.A.}},
\bauthor{\bsnm{Keogh}, \binits{E.}}:
\batitle{The ucr time series archive}.
\bjtitle{IEEE/CAA Journal of Automatica Sinica}
\bvolume{6}(\bissue{6}),
\bfpage{1293}--\blpage{1305}
(\byear{2019})
\end{barticle}
\endbibitem

\bibitem[\protect\citeauthoryear{MacQueen et~al.}{1967}]{kmeans}
\begin{bchapter}
\bauthor{\bsnm{MacQueen}, \binits{J.}}, \betal:
\bctitle{Some methods for classification and analysis of multivariate observations}.
In: \bbtitle{Proceedings of the Fifth Berkeley Symposium on Mathematical Statistics and Probability},
vol. \bseriesno{1},
pp. \bfpage{281}--\blpage{297}
(\byear{1967}).
\bcomment{Oakland, CA, USA}
\end{bchapter}
\endbibitem

\bibitem[\protect\citeauthoryear{Berndt and Clifford}{1994}]{DTW}
\begin{bchapter}
\bauthor{\bsnm{Berndt}, \binits{D.J.}},
\bauthor{\bsnm{Clifford}, \binits{J.}}:
\bctitle{Using dynamic time warping to find patterns in time series.}
In: \bbtitle{KDD Workshop},
vol. \bseriesno{10},
pp. \bfpage{359}--\blpage{370}
(\byear{1994}).
\bcomment{Seattle, WA}
\end{bchapter}
\endbibitem

\bibitem[\protect\citeauthoryear{Yang and Leskovec}{2011}]{KSC}
\begin{bchapter}
\bauthor{\bsnm{Yang}, \binits{J.}},
\bauthor{\bsnm{Leskovec}, \binits{J.}}:
\bctitle{Patterns of temporal variation in online media}.
In: \bbtitle{Proceedings of the Fourth ACM International Conference on Web Search and Data Mining},
pp. \bfpage{177}--\blpage{186}
(\byear{2011})
\end{bchapter}
\endbibitem

\bibitem[\protect\citeauthoryear{Zakaria et~al.}{2012}]{ushapelet}
\begin{bchapter}
\bauthor{\bsnm{Zakaria}, \binits{J.}},
\bauthor{\bsnm{Mueen}, \binits{A.}},
\bauthor{\bsnm{Keogh}, \binits{E.}}:
\bctitle{Clustering time series using unsupervised-shapelets}.
In: \bbtitle{2012 IEEE 12th International Conference on Data Mining},
pp. \bfpage{785}--\blpage{794}
(\byear{2012}).
\bcomment{IEEE}
\end{bchapter}
\endbibitem

\bibitem[\protect\citeauthoryear{Zhang et~al.}{2016}]{uslm}
\begin{bchapter}
\bauthor{\bsnm{Zhang}, \binits{Q.}},
\bauthor{\bsnm{Wu}, \binits{J.}},
\bauthor{\bsnm{Yang}, \binits{H.}},
\bauthor{\bsnm{Tian}, \binits{Y.}},
\bauthor{\bsnm{Zhang}, \binits{C.}}:
\bctitle{Unsupervised feature learning from time series.}
In: \bbtitle{IJCAI},
pp. \bfpage{2322}--\blpage{2328}
(\byear{2016})
\end{bchapter}
\endbibitem

\bibitem[\protect\citeauthoryear{Lei et~al.}{2019}]{SPIRAL}
\begin{bchapter}
\bauthor{\bsnm{Lei}, \binits{Q.}},
\bauthor{\bsnm{Yi}, \binits{J.}},
\bauthor{\bsnm{Vaculin}, \binits{R.}},
\bauthor{\bsnm{Wu}, \binits{L.}},
\bauthor{\bsnm{Dhillon}, \binits{I.S.}}:
\bctitle{Similarity preserving representation learning for time series clustering}.
In: \bbtitle{Proceedings of the 28th International Joint Conference on Artificial Intelligence},
pp. \bfpage{2845}--\blpage{2851}
(\byear{2019}).
\bcomment{AAAI Press}
\end{bchapter}
\endbibitem

\bibitem[\protect\citeauthoryear{Guo et~al.}{2017}]{IDEC}
\begin{bchapter}
\bauthor{\bsnm{Guo}, \binits{X.}},
\bauthor{\bsnm{Gao}, \binits{L.}},
\bauthor{\bsnm{Liu}, \binits{X.}},
\bauthor{\bsnm{Yin}, \binits{J.}}:
\bctitle{Improved deep embedded clustering with local structure preservation.}
In: \bbtitle{IJCAI},
pp. \bfpage{1753}--\blpage{1759}
(\byear{2017})
\end{bchapter}
\endbibitem

\bibitem[\protect\citeauthoryear{Madiraju et~al.}{2018}]{DTC}
\begin{botherref}
\oauthor{\bsnm{Madiraju}, \binits{N.S.}},
\oauthor{\bsnm{Sadat}, \binits{S.M.}},
\oauthor{\bsnm{Fisher}, \binits{D.}},
\oauthor{\bsnm{Karimabadi}, \binits{H.}}:
Deep temporal clustering: Fully unsupervised learning of time-domain features.
arXiv preprint arXiv:1802.01059
(2018)
\end{botherref}
\endbibitem

\bibitem[\protect\citeauthoryear{Ma et~al.}{2019}]{DTCR}
\begin{bchapter}
\bauthor{\bsnm{Ma}, \binits{Q.}},
\bauthor{\bsnm{Zheng}, \binits{J.}},
\bauthor{\bsnm{Li}, \binits{S.}},
\bauthor{\bsnm{Cottrell}, \binits{G.W.}}:
\bctitle{Learning representations for time series clustering}.
In: \bbtitle{Advances in Neural Information Processing Systems},
pp. \bfpage{3776}--\blpage{3786}
(\byear{2019})
\end{bchapter}
\endbibitem

\bibitem[\protect\citeauthoryear{Ma et~al.}{2021}]{CRLI}
\begin{barticle}
\bauthor{\bsnm{Ma}, \binits{Q.}},
\bauthor{\bsnm{Chen}, \binits{C.}},
\bauthor{\bsnm{Li}, \binits{S.}},
\bauthor{\bsnm{Cottrell}, \binits{G.W.}}:
\batitle{Learning {Representations} for {Incomplete} {Time} {Series} {Clustering}}.
\bjtitle{Proceedings of the AAAI Conference on Artificial Intelligence}
\bvolume{35}(\bissue{10}),
\bfpage{8837}--\blpage{8846}
(\byear{2021})
\end{barticle}
\endbibitem

\bibitem[\protect\citeauthoryear{He et~al.}{2014}]{ELM1}
\begin{barticle}
\bauthor{\bsnm{He}, \binits{Q.}},
\bauthor{\bsnm{Jin}, \binits{X.}},
\bauthor{\bsnm{Du}, \binits{C.}},
\bauthor{\bsnm{Zhuang}, \binits{F.}},
\bauthor{\bsnm{Shi}, \binits{Z.}}:
\batitle{Clustering in extreme learning machine feature space}.
\bjtitle{Neurocomputing}
\bvolume{128},
\bfpage{88}--\blpage{95}
(\byear{2014})
\end{barticle}
\endbibitem

\bibitem[\protect\citeauthoryear{Peng et~al.}{2016}]{ELM2}
\begin{barticle}
\bauthor{\bsnm{Peng}, \binits{Y.}},
\bauthor{\bsnm{Zheng}, \binits{W.-L.}},
\bauthor{\bsnm{Lu}, \binits{B.-L.}}:
\batitle{An unsupervised discriminative extreme learning machine and its applications to data clustering}.
\bjtitle{Neurocomputing}
\bvolume{174},
\bfpage{250}--\blpage{264}
(\byear{2016})
\end{barticle}
\endbibitem

\bibitem[\protect\citeauthoryear{Wu et~al.}{2018}]{ELM3}
\begin{bchapter}
\bauthor{\bsnm{Wu}, \binits{L.}},
\bauthor{\bsnm{Chen}, \binits{P.-Y.}},
\bauthor{\bsnm{Yen}, \binits{I.E.-H.}},
\bauthor{\bsnm{Xu}, \binits{F.}},
\bauthor{\bsnm{Xia}, \binits{Y.}},
\bauthor{\bsnm{Aggarwal}, \binits{C.}}:
\bctitle{Scalable spectral clustering using random binning features}.
In: \bbtitle{Proceedings of the 24th ACM SIGKDD},
pp. \bfpage{2506}--\blpage{2515}
(\byear{2018})
\end{bchapter}
\endbibitem

\bibitem[\protect\citeauthoryear{Dempster et~al.}{2020}]{ROCKET}
\begin{barticle}
\bauthor{\bsnm{Dempster}, \binits{A.}},
\bauthor{\bsnm{Petitjean}, \binits{F.}},
\bauthor{\bsnm{Webb}, \binits{G.I.}}:
\batitle{Rocket: exceptionally fast and accurate time series classification using random convolutional kernels}.
\bjtitle{Data Mining and Knowledge Discovery}
\bvolume{34}(\bissue{5}),
\bfpage{1454}--\blpage{1495}
(\byear{2020})
\end{barticle}
\endbibitem

\bibitem[\protect\citeauthoryear{Dempster et~al.}{2021}]{MINIROCKET}
\begin{bchapter}
\bauthor{\bsnm{Dempster}, \binits{A.}},
\bauthor{\bsnm{Schmidt}, \binits{D.F.}},
\bauthor{\bsnm{Webb}, \binits{G.I.}}:
\bctitle{Minirocket: A very fast (almost) deterministic transform for time series classification}.
In: \bbtitle{Proceedings of the 27th ACM SIGKDD},
pp. \bfpage{248}--\blpage{257}
(\byear{2021})
\end{bchapter}
\endbibitem

\bibitem[\protect\citeauthoryear{Tan et~al.}{2022}]{multirocket}
\begin{barticle}
\bauthor{\bsnm{Tan}, \binits{C.W.}},
\bauthor{\bsnm{Dempster}, \binits{A.}},
\bauthor{\bsnm{Bergmeir}, \binits{C.}},
\bauthor{\bsnm{Webb}, \binits{G.I.}}:
\batitle{Multirocket: multiple pooling operators and transformations for fast and effective time series classification}.
\bjtitle{Data Mining and Knowledge Discovery}
\bvolume{36}(\bissue{5}),
\bfpage{1623}--\blpage{1646}
(\byear{2022})
\end{barticle}
\endbibitem

\bibitem[\protect\citeauthoryear{Rahimi et~al.}{2007}]{RF1}
\begin{bchapter}
\bauthor{\bsnm{Rahimi}, \binits{A.}},
\bauthor{\bsnm{Recht}, \binits{B.}}, \betal:
\bctitle{Random features for large-scale kernel machines.}
In: \bbtitle{NIPS},
vol. \bseriesno{3},
p. \bfpage{5}
(\byear{2007}).
\bcomment{Citeseer}
\end{bchapter}
\endbibitem

\bibitem[\protect\citeauthoryear{Chitta et~al.}{2012}]{RF2}
\begin{bchapter}
\bauthor{\bsnm{Chitta}, \binits{R.}},
\bauthor{\bsnm{Jin}, \binits{R.}},
\bauthor{\bsnm{Jain}, \binits{A.K.}}:
\bctitle{Efficient kernel clustering using random fourier features}.
In: \bbtitle{2012 IEEE 12th International Conference on Data Mining},
pp. \bfpage{161}--\blpage{170}
(\byear{2012}).
\bcomment{IEEE}
\end{bchapter}
\endbibitem

\bibitem[\protect\citeauthoryear{Farahmand et~al.}{2017}]{RF3}
\begin{bchapter}
\bauthor{\bsnm{Farahmand}, \binits{A.-m.}},
\bauthor{\bsnm{Pourazarm}, \binits{S.}},
\bauthor{\bsnm{Nikovski}, \binits{D.}}:
\bctitle{Random projection filter bank for time series data.}
In: \bbtitle{NIPS},
pp. \bfpage{6562}--\blpage{6572}
(\byear{2017})
\end{bchapter}
\endbibitem

\bibitem[\protect\citeauthoryear{Lin et~al.}{2007}]{SAX}
\begin{barticle}
\bauthor{\bsnm{Lin}, \binits{J.}},
\bauthor{\bsnm{Keogh}, \binits{E.}},
\bauthor{\bsnm{Wei}, \binits{L.}},
\bauthor{\bsnm{Lonardi}, \binits{S.}}:
\batitle{Experiencing sax: a novel symbolic representation of time series}.
\bjtitle{Data Mining and knowledge discovery}
\bvolume{15}(\bissue{2}),
\bfpage{107}--\blpage{144}
(\byear{2007})
\end{barticle}
\endbibitem

\bibitem[\protect\citeauthoryear{Fern and Brodley}{2004}]{HBGF}
\begin{bchapter}
\bauthor{\bsnm{Fern}, \binits{X.Z.}},
\bauthor{\bsnm{Brodley}, \binits{C.E.}}:
\bctitle{Solving cluster ensemble problems by bipartite graph partitioning}.
In: \bbtitle{Proceedings of the Twenty-first International Conference on Machine Learning},
p. \bfpage{36}
(\byear{2004}).
\bcomment{ACM}
\end{bchapter}
\endbibitem

\bibitem[\protect\citeauthoryear{Karypis and Kumar}{1998}]{metis}
\begin{barticle}
\bauthor{\bsnm{Karypis}, \binits{G.}},
\bauthor{\bsnm{Kumar}, \binits{V.}}:
\batitle{A fast and high quality multilevel scheme for partitioning irregular graphs}.
\bjtitle{SIAM Journal on scientific Computing}
\bvolume{20}(\bissue{1}),
\bfpage{359}--\blpage{392}
(\byear{1998})
\end{barticle}
\endbibitem

\bibitem[\protect\citeauthoryear{R{\'e}v{\'e}sz}{2014}]{lln}
\begin{bbook}
\bauthor{\bsnm{R{\'e}v{\'e}sz}, \binits{P.}}:
\bbtitle{The Laws of Large Numbers}
vol. \bseriesno{4}.
\bpublisher{Academic Press}, \blocation{???}
(\byear{2014})
\end{bbook}
\endbibitem

\bibitem[\protect\citeauthoryear{Hoeffding}{1994}]{hoeffding}
\begin{botherref}
\oauthor{\bsnm{Hoeffding}, \binits{W.}}:
Probability inequalities for sums of bounded random variables.
The collected works of Wassily Hoeffding,
409--426
(1994)
\end{botherref}
\endbibitem

\bibitem[\protect\citeauthoryear{Pedregosa et~al.}{2011}]{sklearn}
\begin{barticle}
\bauthor{\bsnm{Pedregosa}, \binits{F.}},
\bauthor{\bsnm{Varoquaux}, \binits{G.}},
\bauthor{\bsnm{Gramfort}, \binits{A.}},
\bauthor{\bsnm{Michel}, \binits{V.}},
\bauthor{\bsnm{Thirion}, \binits{B.}},
\bauthor{\bsnm{Grisel}, \binits{O.}},
\bauthor{\bsnm{Blondel}, \binits{M.}},
\bauthor{\bsnm{Prettenhofer}, \binits{P.}},
\bauthor{\bsnm{Weiss}, \binits{R.}},
\bauthor{\bsnm{Dubourg}, \binits{V.}}, \betal:
\batitle{Scikit-learn: Machine learning in python}.
\bjtitle{the Journal of machine Learning research}
\bvolume{12},
\bfpage{2825}--\blpage{2830}
(\byear{2011})
\end{barticle}
\endbibitem

\bibitem[\protect\citeauthoryear{Rand}{1971}]{randindex}
\begin{barticle}
\bauthor{\bsnm{Rand}, \binits{W.M.}}:
\batitle{Objective criteria for the evaluation of clustering methods}.
\bjtitle{Journal of the American Statistical association}
\bvolume{66}(\bissue{336}),
\bfpage{846}--\blpage{850}
(\byear{1971})
\end{barticle}
\endbibitem

\bibitem[\protect\citeauthoryear{Dem{\v{s}}ar}{2006}]{CD}
\begin{barticle}
\bauthor{\bsnm{Dem{\v{s}}ar}, \binits{J.}}:
\batitle{Statistical comparisons of classifiers over multiple data sets}.
\bjtitle{Journal of Machine learning research}
\bvolume{7}(\bissue{Jan}),
\bfpage{1}--\blpage{30}
(\byear{2006})
\end{barticle}
\endbibitem

\bibitem[\protect\citeauthoryear{Saito and Coifman}{1994}]{CBF}
\begin{botherref}
\oauthor{\bsnm{Saito}, \binits{N.}},
\oauthor{\bsnm{Coifman}, \binits{R.R.}}:
Local feature extraction and its applications using a library of bases.
PhD thesis,
Yale University
(1994)
\end{botherref}
\endbibitem

\bibitem[\protect\citeauthoryear{Van~der Maaten and Hinton}{2008}]{tsne}
\begin{botherref}
\oauthor{\bsnm{Maaten}, \binits{L.}},
\oauthor{\bsnm{Hinton}, \binits{G.}}:
Visualizing data using t-sne.
Journal of machine learning research
\textbf{9}(11)
(2008)
\end{botherref}
\endbibitem

\end{thebibliography}

\end{document}